\newtheorem{Lemma}{Lemma}
\newtheorem{Theorem}{Theorem}
\newtheorem{Proof}{Proof}
\title{DyFlow: Dynamic Workflow Framework for Agentic Reasoning}
\definecolor{promptmain}{RGB}{80,160,180}     
\definecolor{promptaccent}{RGB}{245,250,253} 
\definecolor{promptborder}{RGB}{180,210,225} 
\newtcolorbox{promptbox}[1]{%
    colback=promptaccent,
    colframe=promptborder,
    coltitle=black,
    boxrule=0.4pt,
    arc=0pt,
    left=2mm, right=2mm, top=1mm, bottom=1mm,
    title={#1},
    fonttitle=\bfseries,
}
\author{%
  \textbf{Yanbo Wang}$^{1}$, \textbf{Zixiang Xu}$^{1}$, \textbf{Yue Huang}$^{2}$, \textbf{Xiangqi Wang}$^{2}$,
  \textbf{Zirui Song}$^{1}$ \\ 
  \textbf{Lang Gao}$^{1}$, \textbf{Chenxi Wang}$^{1}$, \textbf{Xiangru Tang}$^{3}$, \textbf{Yue Zhao}$^{4}$, \textbf{Arman Cohan}$^{3}$\\ \textbf{Xiangliang Zhang}$^{2}$, \textbf{Xiuying Chen}$^{1,\dagger}$\\[3pt]
  $^{1}$Mohamed bin Zayed University of Artificial Intelligence (MBZUAI)\\
  $^{2}$University of Notre Dame, $^{3}$Yale University, $^{4}$University of Southern California\\[3pt]
  {Email: \texttt{wyf23187@gmail.com} \quad $^{\dagger}$Corresponding author}
}
\begin{document}

\newcommand{\authorcomment}[2]{\textcolor{blue}{\textbf{[#1: #2]}}}
\newcommand{\yanbo}[1]{\authorcomment{Yanbo}{#1}}

\maketitle
\begin{abstract}
Agent systems based on large language models (LLMs) have shown great potential in complex reasoning tasks, but building efficient and generalizable workflows remains a major challenge. Most existing approaches rely on manually designed processes, which limits their adaptability across different tasks. While a few methods attempt automated workflow generation, they are often tied to specific datasets or query types and make limited use of intermediate feedback, reducing system robustness and reasoning depth. Moreover, their operations are typically predefined and inflexible.
To address these limitations, we propose \textbf{DyFlow}, a dynamic workflow generation framework that adaptively constructs and adjusts reasoning procedures based on task requirements and real-time intermediate feedback, thereby enhancing cross-task generalization.
DyFlow consists of two core components: a designer and an executor. The designer decomposes complex problems into a sequence of sub-goals defined by high-level objectives and dynamically plans the next steps based on intermediate outputs and feedback. These plans are then carried out by the executor, which executes each operation using dynamic operators with context-aware parameterization, enabling flexible and semantically grounded reasoning.
We systematically evaluate DyFlow across diverse domains, including social reasoning, biomedical tasks, mathematical problem solving, and code generation.
Results demonstrate that DyFlow significantly outperforms existing baselines, achieving substantial Pass@k improvements and exhibiting robust generalization across diverse domains. The code is publicly available at \url{https://github.com/wyf23187/DyFlow}.
\end{abstract}
\section{Introduction}
Large language models (LLMs) have demonstrated remarkable abilities in language understanding, generation, and complex reasoning tasks \citep{achiam2023gpt,song2024hazards, song2025quite}.
They support many applications, from dialogue systems and content generation to autonomous agents and multi-step decision-making \citep{jin2019pubmedqa,wu2023autogen,song2025injecting,wang2025word}.
Recently, there has been a growing trend of deploying LLMs as \textit{agents}, where multiple LLMs collaborate to tackle complex tasks. Debate-based systems enable agents to critique each other's solutions \citep{du2023improving}, and team-based agents distribute tasks across specialized roles \citep{hong2023metagpt,li2023camel}.

However, most existing multi-agent frameworks use \textit{static}, pre-defined workflows, as illustrated in Figure~\ref{fig:paradigm}. Each agent’s role and task sequence is fixed beforehand, proceeding rigidly without intermediate feedback. For instance, CAMEL \citep{li2023camel} assigns agents pre-defined roles via system prompts, MetaGPT \citep{hong2023metagpt} enforces collaboration based on standard operating procedures, and AutoGen \citep{wu2023autogen} and OpenAgents \citep{xie2023openagents} orchestrate agents through static communication graphs or APIs. 
When a sub-task fails or produces flawed output, these systems typically halt or propagate errors due to the lack of mechanisms for revising plans or goals. 
Recent frameworks introduce some adaptability but with limitations: AFlow \citep{zhang2024aflow} and ADAS \citep{hu2024automated} optimize workflows via offline search guided by dataset-level performance, still tied to specific training distributions. DyLAN \citep{liu2023dynamic} and MaAS \citep{zhang2025multi} offer query-specific agent configurations but do not adjust subgoals based on real-time feedback.
% \textcolor{purple}{Yue Z.: (optional) Should we also mention the self-correction mechanism like ``some systems incorporate iterative refinement or self-revision to recover from sub-task failures \citep{du2023improving,wu2023autogen}, but these reactive strategies incur high computational costs and do not support proactive restructuring of goals or workflows.''.}

\begin{figure}
    \centering
    \includegraphics[width=1\linewidth]{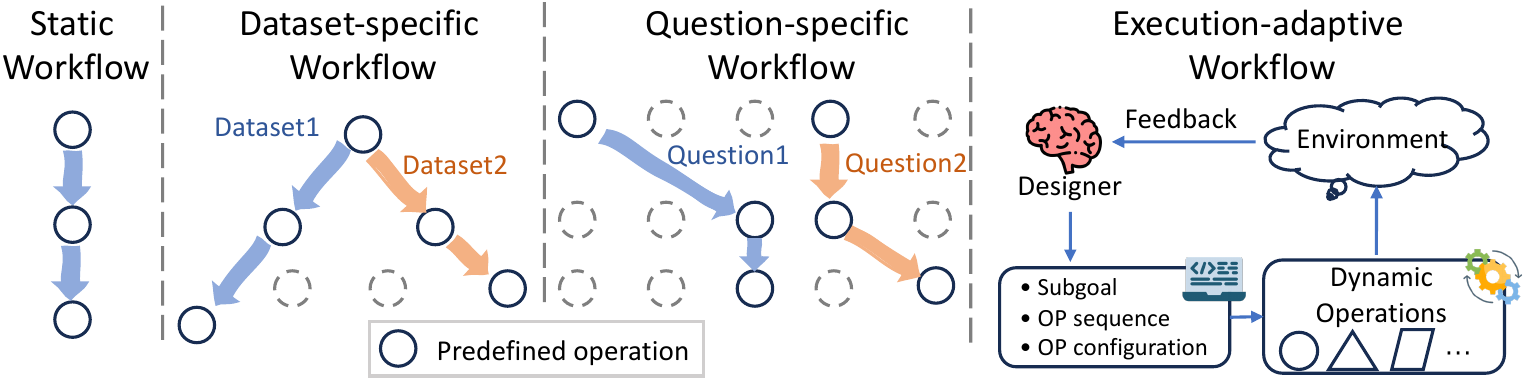}
    \vspace{-1em}
    \caption{Paradigm shift in LLM-based reasoning workflows. From left to right: static workflows apply fixed sequences across all tasks; dataset-specific and question-specific workflows allow more variation but rely on predefined operations (OP); DyFlow adopts an execution-adaptive paradigm that dynamically adjusts the workflow based on intermediate feedback.}
    \label{fig:paradigm}
    \vspace{-2em}
\end{figure}

% \yue{After reading the method overview, I'm still confused. I recommend you describe it in a sequenced way. For instance, you can start with ``Given a task, the designer first xxx, then xxx, ..}

To address these limitations, we propose \textbf{DyFlow}, a feedback-driven framework for LLM-based agents that dynamically adapts subgoal planning during execution. DyFlow is built on a hierarchical \textit{designer-executor} architecture and operates in a sequenced manner:
Given a task, the high-level \textit{designer} initializes the process by planning the first subgoal and its corresponding operator execution graph based on the task requirements. The low-level \textit{executor} then carries out this subgoal, leveraging tools, APIs, or external functions as required.
At each execution step, DyFlow collects intermediate outputs, tool feedback, and possible error signals. Based on this updated context, the designer generates a revised subgoal plan tailored to the current state. This feedback loop continues throughout the task, enabling DyFlow to adjust its reasoning trajectory in real time.
If a subgoal fails or encounters unexpected results, DyFlow can proactively revise the plan, retry the subgoal, or reassign responsibilities—supporting both high-level strategy shifts and fine-grained execution corrections.
Compared to prior methods such as DyLAN \citep{liu2023dynamic} and ReAct \citep{yao2023react}, which only adapt agent roles or individual actions, DyFlow enables dynamic restructuring at the subgoal level, offering greater robustness and flexibility in complex, real-world tasks.

While the executor does not require any additional training and can be directly instantiated using existing open-source or proprietary LLMs, the designer is trained separately to acquire strong planning capabilities. 
To this end, we employ a two-phase learning strategy to empower lightweight models to function as effective designers comparable to large proprietary models.
In the initialization phase, the designer is trained via supervised fine-tuning on a collection of successful planning examples, allowing it to acquire basic planning capabilities. This is followed by a self-play phase, where the designer interacts with the executor to generate its own execution trajectories and iteratively refine its planning policy based on feedback. 
The resulting feedback trains the designer to favor successful plans and learn from failures. 
Unlike static expert demonstrations, this method internalizes the dynamic feedback loop for adaptive planning. 
Our approach does not impose additional training requirements on the executor, enabling direct use of existing open-source or proprietary LLMs.

We evaluate DyFlow across diverse reasoning domains, including causal, math, code, medical, and social reasoning. Experimental results demonstrate that DyFlow significantly outperforms strong baselines. Notably, it exhibits robust cross-domain generalization and adaptability to different executor models, even when deployed with unseen LLMs.

Our contributions are summarized as follows:
\begin{itemize}[
    leftmargin=1.5em,
    itemsep=0.2ex,   
    parsep=0pt,      
    topsep=0pt,      
    partopsep=0pt, 
]
   \item We introduce \textbf{DyFlow}, a dynamic planning framework for LLM-based agents that adaptively revises reasoning procedures based on real-time feedback. DyFlow consists of two key components: a high-level \textit{designer} that generates and revises subgoal plans, and a low-level \textit{executor} that carries out the subgoals. 
    \item We develop a designer training method that equips lightweight models with strong structured reasoning ability, achieving performance comparable to proprietary LLMs.
    \item We demonstrate DyFlow’s robustness across diverse domains, models, and tasks through systematic experiments covering generalization, efficiency, and ablation.
\end{itemize}

\section{Related Work}

\subsection{Static and Dynamic Agentic Workflows}

Many LLM-based agent systems are built upon static workflows, where agent roles, execution order, and interaction protocols are predefined and fixed throughout the task. For example, MetaGPT~\citep{hong2023metagpt} follows standard operating procedures (SOPs) encoded into prompt templates to coordinate collaborative coding, while AutoGen~\citep{wu2023autogen} orchestrates multi-agent dialogues through rigid communication graphs. CAMEL~\citep{li2023camel} and OpenAgents~\citep{xie2023openagents} further reinforce this paradigm by hard-coding agent roles or API usage. Although such designs facilitate structured cooperation, they lack the capacity to revise plans in response to failures or evolving context, resulting in limited robustness and adaptability.

To reduce reliance on manual design, a line of research has explored automated agent construction. AFlow~\citep{zhang2024aflow} refines code-centric workflows using Monte Carlo Tree Search, ADAS~\citep{hu2024automated} performs meta-search over agent architectures, and AgentSquare~\citep{shang2024agentsquare} evolves modular components via composition. These systems automate workflow design, yet they still adopt one-shot planning: once constructed, the workflow remains fixed during execution. Other methods such as DyLAN~\citep{liu2023dynamic} and MaAS~\citep{zhang2025multi} dynamically configure agents or select templates based on task complexity, but lack mechanisms to adapt subgoal plans based on runtime feedback. More recently, ScoreFlow~\citep{wang2025scoreflow} and MaAS introduce continuous or conditional search spaces over workflows and agent architectures, offering more expressive design flexibility. However, they still treat planning as a pre-execution optimization problem, separated from the execution process itself. 

\subsection{Feedback-based Correction in LLM Systems}

Intermediate feedback has been explored for improving LLM reasoning ability across various settings. In single-agent systems, Reflexion~\citep{shinn2023reflexion} performs retrospective self-evaluation across episodes, while Tree of Thoughts~\citep{yao2023tree} enables trajectory-level revision via search. DSPy~\citep{khattab2023dspy} introduces assertion-based checks to detect and fix failures at the module level. In multi-agent frameworks, AutoGen~\citep{wu2023autogen} supports retrying failed actions within fixed communication rounds.

However, these correction mechanisms have been underutilized in workflow planning. Effective correction at the workflow level requires more than recovering from execution failures, as it demands the ability to dynamically revise subgoal decomposition and operator selection in response to intermediate signals. This poses stricter requirements for granularity and adaptability than existing designs can satisfy. DyFlow addresses this gap by directly integrating feedback into the planning loop. It adjusts subgoal plans and operator configurations in real time based on contextual signals, enabling fine-grained and resilient reasoning workflows that evolve with task progress.

\section{Methodology: DyFlow Framework}

In this section, we introduce \textbf{DyFlow}, a reasoning framework that constructs dynamic workflows through iterative subgoal planning and feedback integration. At each step, a designer generates a task-specific plan, represented as a stage subgraph, which is conditioned on the current context and intermediate outputs. This design enables DyFlow to adapt its reasoning trajectory in response to execution results, offering greater flexibility than static or template-based workflows.

\subsection{Formalization of DyFlow Planning}

\begin{figure}
    \centering
    \includegraphics[width=1\linewidth]{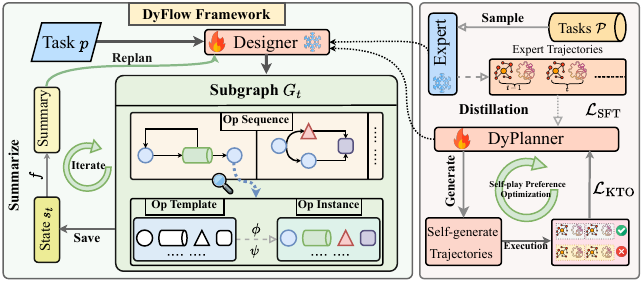}
    \vspace{-1em}
    \caption{DyFlow dynamically constructs reasoning workflows by generating stage subgraphs based on the current task state. A high-level designer plans operator sequences, while a low-level executor executes them using memory. The designer is trained via supervised distillation and self-play preference optimization.}
    \label{fig:dyflow_framework}
    \vspace{-1em}
\end{figure}

To support our dynamic workflow construction, this subsection formalizes DyFlow's planning mechanism, detailing its state representation, operator templates, and the structure of stage subgraphs.

DyFlow formulates complex reasoning tasks as a dynamic sequence of decision-making steps. Each task \( p \in \mathcal{P} \) requires a series of interdependent actions to reach a solution. At every step \( t \), the system maintains a state \( s_t \), which captures the complete context necessary for planning and execution. 
\( s_t \) includes the original task specification, previously generated plans \( (G_0, \dots, G_{t-1}) \), intermediate outputs such as partial answers or review verdicts, and any encountered errors. The process begins with an initial state \( s_0 \), containing only the task itself, and this state is progressively updated as the system executes each planning step.

DyFlow draws from a finite set of \textbf{operator templates} \( \mathcal{O} = \{O_1, \dots, O_N\} \), each corresponding to a basic type of reasoning or execution step. These templates define reusable operations that can be instantiated according to context. At each step, DyFlow constructs a structured plan represented as a stage subgraph \( G_t = (V_t, E_t, v_{\text{start}}^t, C_{\text{end}}^t) \). This directed graph specifies a set of actions to execute: \( V_t \) contains operator instances, each instantiated from a template in \( \mathcal{O} \). Formally, \textbf{an operator instance} \( o \in V_t \) is defined as a tuple \( o = (O_k, \phi, \psi) \), where \( O_k \in \mathcal{O} \) is the template, \( \phi \) is a fine-grained instruction tailored to the current context, and \( \psi \) is a list of input keys referencing a global memory buffer $\mathcal{M}$, which stores previous outputs as key–value pairs. This design allows operator instances to flexibly reuse information across reasoning steps. The edge set \( E_t \) encodes dependencies between these operations, \( v_{\text{start}}^t \) marks the entry point of execution, and \( C_{\text{end}}^t \) defines conditions for plan termination. A complete list of operator templates and their descriptions is provided in Appendix~\ref{appendix:modular_operators}.

A full task execution generates a trajectory \( \tau = (s_0, G_0, s_1, G_1, \dots, G_{T-1}, s_T) \), where each state transition \( s_t \rightarrow s_{t+1} \) results from executing the corresponding plan \( G_t \). Execution halts when the system triggers a terminate action, reaches a predefined step limit \( T_{\text{max}} \), or encounters an unrecoverable error. Through this formulation, DyFlow supports fine-grained, context-aware reasoning that can adaptively refine its strategy as execution unfolds. A formal theoretical analysis of DyFlow’s performance under dynamic planning is included in Appendix~\ref{sec:theory}.

\subsection{Planning and Execution Process}
Building on the formal definition of DyFlow’s planning mechanism, this subsection describes the execution process, focusing on the interaction between the high-level designer and the low-level executor during task solving.

DyFlow employs a layered control structure that separates high-level planning from low-level execution, ensuring adaptability in task-solving. This design aligns broad objectives with precise actions, using the state’s context to inform decisions.

The \textbf{designer}, implemented as a policy $\pi_\theta$ with parameters $\theta$, manages strategic planning. At step $t$, it obtains a condensed state summary $f_{\text{summary}}(s_t)$ using GPT-4o-mini, then uses this summary to determine a subgoal and generate a stage subgraph $G_t$, selecting and configuring operator instances ($V_t$) to achieve it.
 Formally, the designer generates:
\begin{equation}
G_t \sim \pi_\theta(\,\cdot\,|\,f_{\text{summary}}(s_t))
\end{equation}
where $G_t = (V_t, E_t, v_{\text{start}}^t, C_{\text{end}}^t)$ and each operator instance $o \in V_t$ is defined as $o = (O_k, \phi, \psi)$ with $O_k \in \mathcal{O}$. The graph structure is designed to guide step-specific reasoning toward subgoal resolution.

Unlike prior frameworks such as AFlow, MaAS~\cite{zhang2024aflow, zhang2025multi}, which implement control structures like branching or looping via explicit edge logic in code-based graphs, DyFlow adopts a designer-centric design. Instead of hardcoding control flow into the graph topology, we delegate all execution control to the designer. At each planning step, the designer observes the current state, including intermediate outputs, error signals, and planning history, and decides whether to proceed, revise, backtrack, or terminate. This allows DyFlow to simulate conditional and iterative behaviors (such as if-else statements or while loops) through repeated subgraph planning, without relying on manually defined edge logic. As a result, dynamic workflows emerge naturally from context-aware decision making, which simplifies graph construction and enhances flexibility across tasks.

\begin{algorithm}[t]
\caption{DyFlow Framework for Complex Reasoning}
\label{alg:dyflow}
\begin{algorithmic}[1]
\Require Task $p$, templates $\mathcal{O}$, designer $\pi_\theta$, executor $\pi_{\text{exec}}$, 
        budget $T_{\max}$, summarizer $f_{\text{summary}}$
\Ensure Final answer $s_T$ and trajectory $\tau$
\State $s_0 \leftarrow \{p\}$;~$\tau \leftarrow [\,]$;~$\mathcal{M} \leftarrow \{\}$ 
       \Comment{initial state, trace, and empty memory dictionary}
\For{$t=0$ \textbf{to} $T_{\max}-1$}
    \State $z_t \leftarrow f_{\text{summary}}(s_t)$;~
           $G_t \sim \pi_\theta(\cdot\,|\,z_t)$
           \Comment{summarize context and sample stage subgraph}
    \For{\textbf{each} $o = (O_k, \phi, \psi) \in V_t$ (topological order)}
        \State Retrieve inputs $[\mathcal{M}[k] \mid k \in \psi]$ 
               \Comment{$\psi$ is a list of keys, $\mathcal{M}$ is a dictionary}
        \State $r \leftarrow \pi_{\text{exec}}(\phi, [\mathcal{M}[k] \mid k \in \psi])$
        \State Generate a unique key $k_r$ for $r$ \Comment{e.g., by operator id or execution order}
        \State $\mathcal{M}[k_r] \leftarrow r$ 
               \Comment{store output in memory dictionary}
    \EndFor
    \State $s_{t+1} \leftarrow \text{UpdateState}(s_t, G_t, \mathcal{M})$;~
           $\tau.\mathrm{append}((s_t, G_t))$ 
           \Comment{update state and record step}
    \If{$C_{\text{end}}^{t}$ satisfied \textbf{or} \textsc{Terminate}}
        \State \textbf{break} 
              \Comment{stop if designer signals completion}
    \EndIf
\EndFor
\State \Return $s_T,~\tau$

\end{algorithmic}
\end{algorithm}

The \textbf{workflow executor} translates this plan into action. Given $G_t$ and $s_t$, it carries out the operator instances in $V_t$ according to the dependencies in $E_t$, beginning with $v_{\text{start}}^t \in V_t$. For each operator $o = (O_k, \phi, \psi)$, the executor retrieves required inputs from the memory buffer $\mathcal{M}$ according to $\psi$, executes the operation via $\pi_{\text{exec}}$, and appends the result back to $\mathcal{M}$. The updated memory is then used to construct the next state $s_{t+1} = \text{Executor}(s_t, G_t)$. This mechanism ensures both consistent information flow and persistent memory accumulation across planning steps. The full procedure is summarized in Algorithm~\ref{alg:dyflow}, which outlines the iterative interaction between the designer and executor.

\subsection{Distilled Self-Play Learning}
While the executor requires no additional training and can be directly instantiated using existing LLMs, the designer is trained separately to develop strong planning capabilities.
We train the designer policy $\pi_\theta$ using a two-phase method that combines knowledge distillation with self-play preference optimization~\cite{ouyang2022training, ethayarajh2024kto, rafailov2023direct}. The goal is to enable the designer to construct context-sensitive subgoal plans that improve execution outcomes. While the executor remains fixed throughout, the designer is trained to dynamically generate structured reasoning workflows.

The learning process begins by generating trajectories $\tau = (s_0, G_0, s_1, G_1, \dots, G_{T-1}, s_T)$ using the current or a past version of the policy $\pi_\theta$ across a diverse set of tasks. Each trajectory is executed by the workflow executor, and its outcome is compared to the correct solution to assess success. For successful trajectories, all stage-subgraph pairs $(f_{\text{summary}}(s_t), G_t)$ are labeled as ``preferred,'' reflecting effective planning. For failed trajectories, the corresponding pairs are labeled as ``discarded,'' capturing planning errors. All positive and negative examples are derived from these self-play trajectories, enabling the designer to learn from its own performance.

In the first phase, \textbf{knowledge distillation} initializes $\pi_\theta$ to generate reliable subgraphs. Using a dataset $D_{\text{SFT}} = \{(f_{\text{summary}}(s_t), G_t^{\text{expert}})\}$ of high-quality subgraphs from successful trajectories to optimize:
\begin{equation}
\mathcal{L}_{\text{SFT}}(\theta; D_{\text{SFT}}) = -\mathbb{E}_{(s, G^{\text{expert}}) \sim D_{\text{SFT}}} \left[ \log \pi_\theta(G^{\text{expert}} | f_{\text{summary}}(s)) \right]
\end{equation}

This process distills knowledge from successful trajectories, producing the reference policy $\pi_{\text{ref}}$ for the next phase.

In the second phase, \textbf{self-play preference optimization} refines $\pi_\theta$ to favor subgraphs that lead to successful outcomes using KTO~\cite{ethayarajh2024kto}. Using a dataset $D_{\text{pref}} = \{(f_{\text{summary}}(s_t), G_t, l_t)\}$, collected from self-generated trajectories produced by current or past policies, where $l_t \in \{\text{preferred}, \text{discarded}\}$ indicates whether the subgraph is positive or negative based on the trajectory’s success, the optimization encourages effective planning:
\begin{equation}
\mathcal{L}_{\text{pref}}(\theta; D_{\text{pref}}, \pi_{\text{ref}}, \beta) = \mathbb{E}_{(s, G, l) \sim D_{\text{pref}}} \left[ L_{\text{pref}}^{\text{single}}(p_\theta, p_{\text{ref}}, l; \beta) \right].
\end{equation}

Here, $l \in \{\text{preferred}, \text{discarded}\}$ indicates whether the subgraph $G$ is effective or not, and $\beta$ is a hyperparameter controlling the strength of preference alignment. Here, $L_{\text{pref}}^{\text{single}}$ is computed based on whether each subgraph $G$ is labeled as preferred or discarded, with $\beta$ controlling the alignment strength between $\pi_\theta$ and $\pi_{\text{ref}}$.
 By leveraging feedback from self-generated trajectories, this off-policy self-play phase ensures the designer makes context-sensitive, adaptive decisions. This iterative self-improvement strategy equips DyFlow to deliver robust and versatile reasoning capabilities across a wide range of applications.

\paragraph{Preference Optimization Strategy.}
We adopt trajectory-level supervision for preference optimization, using full execution success or failure as the training signal. Assigning fine-grained rewards to individual subgraphs is unreliable in complex reasoning tasks, where even well-formed plans may fail due to executor variability. For similar reasons, we avoid DPO-style pairwise ranking~\citep{rafailov2023direct}, as it is often infeasible to construct clear positive–negative plan pairs: execution outcomes depend on downstream decisions, and the better plan is not always evident from intermediate states. Online reinforcement learning poses additional challenges, reward signals are typically sparse and delayed, and using LLM-based judges introduces high variance due to inconsistent evaluations across steps.\cite{ye2024justice} In contrast, our offline self-play setup with KTO~\citep{ethayarajh2024kto} provides a more stable learning process, enabling effective designer refinement from diverse execution trajectories. 
% Empirically, this approach yields strong performance across tasks while avoiding the instability of pairwise or step-level supervision.

\section{Experiments}

We conduct comprehensive experiments to evaluate DyFlow across five reasoning domains. Our goals are to assess its performance advantage over existing agent planning frameworks, its generalization capabilities to unseen datasets and executor models, the contribution of each component through ablation studies, and qualitative behaviors through case analyses.

\subsection{Experimental Setup}

\textbf{Datasets.} We consider 5 diverse reasoning domains, each represented by a benchmark dataset:
(1) \textbf{Logical Reasoning} using the LiveBench dataset~\cite{white2024livebench}, 
(2) \textbf{Math Reasoning} using the MATH benchmark~\cite{hendrycksmath2021},
(3) \textbf{Medical Reasoning} with PubMedQA~\cite{jin2019pubmedqa},
(4) \textbf{Code Reasoning} via HumanEval~\cite{chen2021evaluating}, and 
(5) \textbf{Social Reasoning} using the SocialMaze~\cite{xu2025socialmaze} benchmark. 
These datasets differ in their input structures, intermediate reasoning steps, and solution formats, and together cover a broad spectrum of reasoning behaviors.

\textbf{Implement Details.} DyFlow is trained only on MATH, PubMedQA, and LiveBench. HumanEval and SocialMaze are held out to assess zero-shot generalization to unseen reasoning domains. To avoid contamination and ensure valid knowledge separation between the designer and executor, we use Phi-4~\cite{abdin2024phi} as the executor model, and train a designer we refer to as \textbf{DyPlanner}, which is initialized from Phi-4 and optimized using the DyFlow training framework. The initial distillation during pretraining is conducted using GPT-4.1~\cite{openai2024gpt4_1}, based on trajectories generated only from our training set. At inference time, both the designer and the executor are run with temperature 0.01 to ensure stable and deterministic behaviors.

\textbf{Baselines.} We compare DyFlow with a comprehensive set of baselines across two categories:  
(1) \emph{Prompting-based methods}, including \textbf{Vanilla} prompting, \textbf{CoT}~\cite{wei2022chain}, \textbf{Self-Consistency (SC)}~\cite{wang2022self}, \textbf{LLM-Debate}~\cite{du2023improving}, and \textbf{Self-Refine}~\cite{madaan2023self}.
(2) \emph{Automated Agent frameworks}, including \textbf{ADAS}~\cite{hu2024automated}, \textbf{AFlow}~\cite{zhang2024aflow}, and \textbf{MaAS}~\cite{zhang2025multi}, which introduce structured reasoning workflows with varying degrees of adaptivity. All methods use the same executor (Phi-4) and operator set for fair comparison, and are evaluated under consistent metrics: accuracy for most tasks and pass@1 for code reasoning.

\definecolor{rowlight}{HTML}{F8FAFB}
\definecolor{rowdark} {HTML}{EEF3F5}
\definecolor{headerbg}{HTML}{DCE7EF}
\definecolor{dyflowbg}{HTML}{E8F2E8}
\definecolor{upclr}   {HTML}{267300}
\definecolor{downclr} {HTML}{B6462C}

\newcommand{\up}[1]{\textcolor{upclr}{\tiny$\uparrow$#1}}
\newcommand{\down}[1]{\textcolor{downclr}{\tiny$\downarrow$#1}}
\newcommand{\refsmall}[1]{{\footnotesize\cite{#1}}}
\renewcommand{\arraystretch}{1.2}

\begin{table*}[tb]
    \centering
    \small
    % \caption{Performance comparison across five reasoning domains. All methods use \texttt{phi-4} as the executor. \textbf{Bold} indicates the best score per column.}
    \caption{Performance comparison across five reasoning domains. All methods use \texttt{phi-4} as the executor. \textbf{Bold} marks the best score in each column. The figures in {\up{\,}} / {\down{\,}} indicate the absolute change with respect to the \emph{Vanilla} baseline.}
    \label{tab:main-results} 
    \begin{threeparttable}
    \rowcolors{2}{rowlight}{rowdark}
    \begin{tabular}{
        >{\columncolor{headerbg}}l
        >{\columncolor{headerbg}}c
        >{\columncolor{headerbg}}c
        >{\columncolor{headerbg}}c
        >{\columncolor{headerbg}}c
        >{\columncolor{headerbg}}c
        >{\columncolor{headerbg}}c
    }
        \toprule
        \textbf{Method} & \textbf{SocialMaze} & \textbf{PubMedQA} & \textbf{MATH} & \textbf{LiveBench} & \textbf{HumanEval} & \textbf{Avg.} \\
        \midrule
        Vanilla & 6.49 & 67.33 & 66.80 & 40.00 & 86.59 & 53.44 \\
        CoT~\refsmall{wei2022chain} & 6.11\down{0.38} & 67.73\up{0.40} & 71.20\up{4.40} & 39.33\down{0.67} & 87.80\up{1.21} & 54.43\up{0.99} \\
        SC~\refsmall{wang2022self} & 10.31\up{3.82} & 68.53\up{1.20} & 71.60\up{4.80} & 42.00\up{2.00} & 87.20\up{0.61} & 55.93\up{2.49} \\
        LLM-Debate~\refsmall{du2023improving} & 12.59\up{6.10} & 68.53\up{1.20} & 72.40\up{5.60} & 41.33\up{1.33} & 84.15\down{2.44} & 55.80\up{2.36} \\
        Self-Refine~\refsmall{madaan2023self} & 10.69\up{4.20} & 69.32\up{1.99} & 70.40\up{3.60} & 34.67\down{5.33} & 81.71\down{4.88} & 53.36\down{0.08} \\
        ADAS~\refsmall{hu2024automated} & 6.11\down{0.38} & 67.33\up{0.00} & 66.80\up{0.00} & 39.33\down{0.67} & 85.37\down{1.22} & 52.99\down{0.45} \\
        AFlow~\refsmall{zhang2024aflow} & 11.45\up{4.96} & 69.72\up{2.39} & 74.00\up{7.20} & 43.33\up{3.33} & 89.02\up{2.43} & 57.50\up{4.06} \\
        MaAS~\refsmall{zhang2025multi} & 13.36\up{6.87} & 69.32\up{1.99} & 73.60\up{6.80} & 44.00\up{4.00} & 88.41\up{1.82} & 57.74\up{4.30} \\
        \rowcolor{dyflowbg}
        \textbf{DyFlow (Ours)} & \textbf{17.18}\up{10.69} & \textbf{72.91}\up{5.58} & \textbf{76.40}\up{9.60} & \textbf{48.67}\up{8.67} & \textbf{92.07}\up{5.48} & \textbf{61.45}\up{8.01} \\
        \bottomrule
    \end{tabular}
    \end{threeparttable}
\end{table*}

\subsubsection{Performance Comparison}
\paragraph{Overall Performance}
Table~\ref{tab:main-results} reports the performance of DyFlow and all baselines across five reasoning domains. DyFlow consistently outperforms prior methods, achieving the highest average accuracy (61.45) with improvements observed across logic, math, medical, code, and social reasoning tasks. Notably, while trained only on MATH, PubMedQA, and LiveBench, DyFlow generalizes robustly to the held-out HumanEval and SocialMaze benchmarks. These results highlight the transferability of DyFlow’s adaptive planning strategy across structurally diverse domains. In particular, DyFlow achieves 17.18 on SocialMaze, a challenging multi-turn benchmark that requires high-level reasoning, substantially outperforming all baselines in this zero-shot setting. This strong performance demonstrates that DyFlow's feedback-driven, adaptive planning enables effective subgoal revision and robust reasoning, even on previously unseen, highly complex tasks.

% A closer look reveals that DyFlow achieves 17.18 on SocialMaze and 48.67 on LiveBench, both the best among all methods. These tasks involve multi-turn inference and logic reasoning under unseen task formats. DyFlow’s designer dynamically revises subgoal plans in response to execution feedback, enabling recovery from early-stage errors and outperforming baselines that rely on static or query-specific designs. Compared to fixed execution graphs, its context-sensitive plan reconfiguration proves critical for robust performance in these untrained settings.

\paragraph{Planning Upper Bound and Stability}

\begin{wrapfigure}{r}{0.46\linewidth}
  \vspace{-1em}
  \centering
  \includegraphics[width=\linewidth]{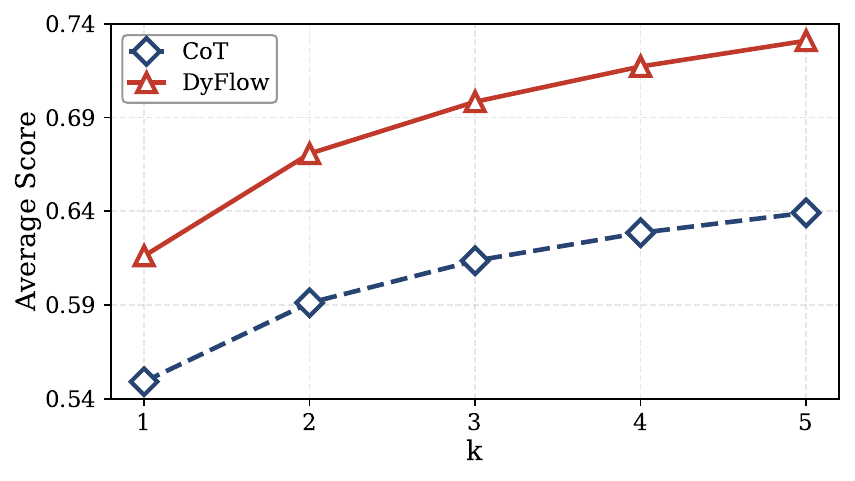}
  \vspace{-2em}
  \caption{
    Average pass@k comparisons between DyFlow and CoT across 5 benchmarks. 
  }
  \label{fig:passk_average}
  \vspace{-1em}
\end{wrapfigure}

In addition to Pass@1, we further evaluate the performance of DyFlow under the top-k setting. As shown in Figure~\ref{fig:passk_average}, DyFlow consistently outperforms CoT across all $k$ values from 1 to 5, with increasingly wider margins at higher $k$. 
This indicates that DyFlow not only generates more accurate first predictions, but also demonstrates greater reasoning stability across multiple completions, with a higher potential to produce strong solutions.

Notably, DyFlow reaches a near-perfect Pass@5 of 0.9817 on HumanEval, highlighting the framework’s strong reasoning upper bound on code tasks. Moreover, the consistent improvements in medical, logical, and math domains suggest that DyFlow enhances both accuracy and diversity of reasoning paths. Per-task breakdowns of Pass@k curves are included in Figure~\ref{fig:appendix_passk}, which reveal similar trends across all domains.

\subsection{Generalization Analysis}

This section evaluates DyFlow’s generalization capabilities across three dimensions: designer models, execution models, and datasets. Each dimension demonstrates DyFlow’s ability to maintain robust performance under varying conditions, leveraging its adaptive planning mechanisms.

\begin{table}[t]
    \centering
    \small
    \caption{Performance of DyFlow with different designer models across five reasoning domains. All designers are trained using the same pipeline. Detailed cost of different designers is in Figure~\ref{tab:cost-analysis}.}
    \label{tab:cross-designer}
    \begin{threeparttable}
    \rowcolors{2}{rowlight}{rowdark}
    \begin{tabular}{
        >{\columncolor{headerbg}}l
        >{\columncolor{headerbg}}c
        >{\columncolor{headerbg}}c
        >{\columncolor{headerbg}}c
        >{\columncolor{headerbg}}c
        >{\columncolor{headerbg}}c
        >{\columncolor{headerbg}}c
    }
    \toprule
    \textbf{Designer Model} & \textbf{SocialMaze} & \textbf{PubMedQA} & \textbf{MATH} & \textbf{LiveBench} & \textbf{HumanEval} & \textbf{Average} \\
    \midrule
    MaAS~\citep{zhang2025multi} & 13.36 & 69.32 & 73.60 & 44.00 & 88.41 & 58.40 \\
    \midrule
    Claude-3.7-Sonnet & 16.41 & 71.71 & 75.60 & \textbf{50.00} & \textbf{92.07} & 61.16 \\
    GPT-4.1 & 16.79 & 72.11 & 73.60 & 47.03 & 89.63 & 59.83 \\
    \rowcolor{dyflowbg}
    \textbf{DyPlanner} & \textbf{17.18} & \textbf{72.91} & \textbf{76.40} & 48.67 & \textbf{92.07} & \textbf{61.45} \\
    \bottomrule
    \end{tabular}
    \end{threeparttable}
\end{table}

\paragraph{Cross-Designer Generalization}

To assess the robustness of our training framework across different designer architectures, we apply the same DyFlow learning pipeline to three backbone models: Claude-3.7-Sonnet, GPT-4.1, and Phi-4. 
% This design isolates the impact of model-specific capabilities from the effect of our designer optimization method. 
We include MaAS as the strongest existing baseline.

As shown in Table~\ref{tab:cross-designer}, DyFlow consistently outperforms MaAS across all designer backbones. 
More specifically, our DyPlanner is initialized from the 14B open-weight Phi-4 model. 
Despite its relatively small scale, DyPlanner achieves performance comparable to larger proprietary designers such as GPT-4.1 and Claude-3.7-Sonnet across most domains. This demonstrates that DyFlow’s strong planning capability does not stem from model size alone, but also from our dedicated two-phase optimization strategy. By combining supervised subgraph distillation with offline preference-based refinement, we equip a compact model with the ability to perform high-quality structured planning across diverse tasks. As shown in Table~\ref{tab:cost-analysis}, DyPlanner offers a more cost-efficient and scalable alternative without sacrificing performance. These results suggest that dynamic, feedback-aware designers can emerge from lightweight open models when trained with appropriate trajectory-level signals, making DyFlow well-suited for real-world deployment in resource-constrained settings.

\paragraph{Cross-Executor Generalization}

\begin{figure}[t]
    \centering
    \includegraphics[width=\linewidth]{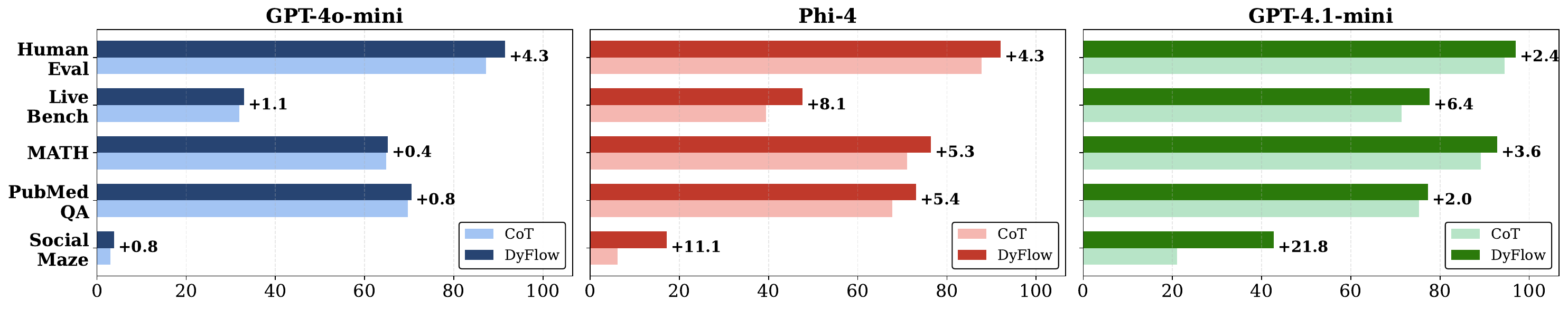}
    \vspace{-1em}
    \caption{
        Cross-executor performance comparison between CoT and DyFlow across five reasoning tasks. DyFlow improves performance across all model scales, with larger gains for stronger models on more challenging tasks. 
        Detailed results are provided in Appendix~\ref{sec:appendix-cross-executor}, Table~\ref{tab:appendix-cross-executor}.
    }
    \label{fig:cross-executor}
    \vspace{-1em}
\end{figure}

\begin{wraptable}[9]{r}{0.4\linewidth}
    \centering
    \vspace{-2em}
    \small
    \caption{
    Cross-task generalization. Each test domain is excluded during training.
    }
    \label{tab:cross-task}
    \rowcolors{2}{rowlight}{rowdark}
    \begin{tabular}{
        >{\columncolor{headerbg}}l
        >{\columncolor{headerbg}}l
        >{\columncolor{headerbg}}c
        >{\columncolor{headerbg}}c
    }
        \toprule
        \textbf{Test} & \textbf{Train} & \textbf{CoT} & \textbf{DyFlow} \\
        \midrule
        SR & MR, QR, LR & 6.11  & \textbf{17.18} \\
        CR & MR, QR, LR & 87.80 & \textbf{92.07} \\
        MR & CR, LR, SR & 71.08 & \textbf{75.20} \\
        QR & CR, LR, SR & 67.73 & \textbf{72.11} \\
        \bottomrule
    \end{tabular}
\end{wraptable}

To further evaluate DyFlow’s generalization capability across executor models, we compare its performance with CoT prompting using three different executors: GPT-4o-mini, Phi-4, and GPT-4.1-mini. All DyFlow variants in this setting use the same DyPlanner. As shown in Figure~\ref{fig:cross-executor}, DyFlow consistently improves performance across all executors and reasoning domains, demonstrating its compatibility with a wide range of language models without requiring executor-specific adaptation.

The gains are especially notable when DyFlow is paired with stronger executors and applied to more complex tasks. For instance, Phi-4 with DyFlow achieves performance close to GPT-4.1-mini with CoT, while incurring only half the cost. When applied to GPT-4.1-mini, DyFlow brings further improvements, particularly on SocialMaze and LiveBench. These results indicate that structured planning from DyPlanner enhances reasoning quality and stability across a variety of executor backbones.

\paragraph{Cross-Task Generalization}

To evaluate cross-task generalization, we construct a series of held-out settings where the designer is trained on a subset of three reasoning domains and tested on the remaining two. For each setting, we select three domains from SocialMaze, PubMedQA, MATH, LiveBench, and HumanEval for training, and evaluate performance on the two excluded domains. For example, when training on MATH, PubMedQA, and LiveBench, we test on SocialMaze and HumanEval. Similarly, other combinations ensure that each domain is excluded from training in at least one setting. As shown in Table~\ref{tab:cross-task}, DyFlow achieves strong performance across the held-out domains, demonstrating that its designer can generalize planning strategies to structurally diverse, unseen reasoning tasks without task-specific supervision.

\begin{table}[htbp]
    \centering
    \small
    \caption{Ablation study on DyFlow using DyPlanner as the designer and Phi-4 as the executor. The full system outperforms all ablated variants across five reasoning tasks.}
    \label{tab:ablation}
    \begin{threeparttable}
    \scalebox{0.98}{
    \rowcolors{2}{rowlight}{rowdark}
    \begin{tabular}{
        >{\columncolor{headerbg}}l
        >{\columncolor{headerbg}}c
        >{\columncolor{headerbg}}c
        >{\columncolor{headerbg}}c
        >{\columncolor{headerbg}}c
        >{\columncolor{headerbg}}c
        >{\columncolor{headerbg}}c
    }
        \toprule
        \textbf{Variant} & \textbf{SocialMaze} & \textbf{PubMedQA} & \textbf{MATH} & \textbf{LiveBench} & \textbf{HumanEval} & \textbf{Avg.} \\
        \midrule
        w/o KTO                   & 13.36 & 68.53 & 72.40 & 43.33 & 89.63 & 57.45 \\
        w/o SFT                   & 15.65 & 69.32 & 73.20 & 45.33 & 91.46 & 59.00 \\
        w/o Dynamic Operator       & 12.21 & 68.92 & 72.80 & 40.67 & 90.24 & 56.97 \\
        w/o Dynamic Planning  & 11.45 & 68.53 & 72.80 & 40.00 & 89.63 & 56.48 \\
        \rowcolor{dyflowbg}
        \textbf{DyFlow (Full)}    & \textbf{17.18} & \textbf{72.91} & \textbf{76.40} & \textbf{48.67} & \textbf{92.07} & \textbf{61.45} \\
        \bottomrule
    \end{tabular}
    }
    \end{threeparttable}
\end{table}

\subsection{Ablation Study}

To assess the contribution of DyFlow’s core components, we conduct ablation experiments targeting both the designer training process and the execution-time mechanisms, as summarized in Table~\ref{tab:ablation}. On the training side, we remove either the knowledge distillation (SFT) or the offline preference optimization (KTO). Both lead to consistent drops in performance, indicating that these two phases are complementary: SFT provides structural guidance for initializing the designer, while KTO enables refinement based on trajectory-level feedback. On the execution side, disabling dynamic operator selection forces the executor to follow a fixed-stage template, reducing its ability to adjust reasoning based on intermediate results.

\begin{wrapfigure}[20]{r}{0.40\linewidth}
    \centering
    \vspace{-1em}
    \includegraphics[width=\linewidth]{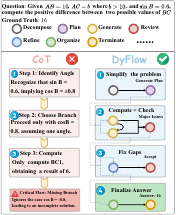}
    \vspace{-1em}
    \caption{Case Study between CoT and DyFlow on MATH dataset.}
    \label{fig:case-study}
\end{wrapfigure}

The largest degradation is observed when dynamic planning is removed. In this setting, we prompt DyPlanner to complete the entire problem in a single reasoning stage without receiving intermediate feedback. This design prevents the designer from iteratively refining subgoals in response to execution signals, undermining DyFlow’s central mechanism of feedback-driven replanning. Together, these results demonstrate that DyFlow’s gains emerge from the synergy between structured designer training and dynamic subgoal adaptation during execution.

\subsection{Case Study}

We conduct a qualitative comparison between DyFlow and a standard reasoning baseline without error correction. The baseline generates a fixed reasoning path based on initial inputs and executes it without revisiting intermediate outputs, making it susceptible to early mistakes and incomplete solutions. As illustrated in Figure~\ref{fig:case-study}, DyFlow instead treats planning as an ongoing process. Guided by the designer, it dynamically assigns operators based on intermediate feedback and task state. When execution produces partial or inconsistent outputs, DyFlow adapts its subgoal plan and invokes refinement operators to identify and correct reasoning errors during runtime. This includes dynamically selecting the appropriate data inputs, adjusting instructions, and refining intermediate results using targeted operator calls.

\section{Conclusion}

In this paper, we presented DyFlow, a framework for dynamic workflow construction in LLM-based reasoning systems. By modeling reasoning as subgraph planning with modular operator execution, DyFlow enables flexible adaptation to intermediate feedback and diverse task requirements. Our evaluations across five reasoning domains demonstrate that DyFlow improves task success rates and generalizes across diverse reasoning tasks. Future work may explore integrating richer error detection signals and expanding the operator set to support more complex interaction protocols.

% \clearpage
\bibliographystyle{unsrt}
\bibliography{references}

\begin{thebibliography}{10}

\bibitem{achiam2023gpt}
Josh Achiam, Steven Adler, Sandhini Agarwal, Lama Ahmad, Ilge Akkaya, Florencia~Leoni Aleman, Diogo Almeida, Janko Altenschmidt, Sam Altman, Shyamal Anadkat, et~al.
\newblock Gpt-4 technical report.
\newblock {\em arXiv preprint arXiv:2303.08774}, 2023.

\bibitem{song2024hazards}
Zirui Song, Guangxian Ouyang, Meng Fang, Hongbin Na, Zijing Shi, Zhenhao Chen, Yujie Fu, Zeyu Zhang, Shiyu Jiang, Miao Fang, et~al.
\newblock Hazards in daily life? enabling robots to proactively detect and resolve anomalies.
\newblock {\em NAACL}, 2024.

\bibitem{song2025quite}
Yuyang Song, Hanxu Yan, Jiale Lao, Yibo Wang, Yufei Li, Yuanchun Zhou, Jianguo Wang, and Mingjie Tang.
\newblock Quite: A query rewrite system beyond rules with llm agents.
\newblock {\em arXiv preprint arXiv:2506.07675}, 2025.

\bibitem{jin2019pubmedqa}
Qiao Jin, Bhuwan Dhingra, Zhengping Liu, William~W Cohen, and Xinghua Lu.
\newblock Pubmedqa: A dataset for biomedical research question answering.
\newblock {\em arXiv preprint arXiv:1909.06146}, 2019.

\bibitem{wu2023autogen}
Qingyun Wu, Gagan Bansal, Jieyu Zhang, Yiran Wu, Beibin Li, Erkang Zhu, Li~Jiang, Xiaoyun Zhang, Shaokun Zhang, Jiale Liu, et~al.
\newblock Autogen: Enabling next-gen llm applications via multi-agent conversation.
\newblock {\em arXiv preprint arXiv:2308.08155}, 2023.

\bibitem{song2025injecting}
Zirui Song, Bin Yan, Yuhan Liu, Miao Fang, Mingzhe Li, Rui Yan, and Xiuying Chen.
\newblock Injecting domain-specific knowledge into large language models: a comprehensive survey.
\newblock {\em arXiv preprint arXiv:2502.10708}, 2025.

\bibitem{wang2025word}
Chenxi Wang, Tianle Gu, Zhongyu Wei, Lang Gao, Zirui Song, and Xiuying Chen.
\newblock Word form matters: Llms' semantic reconstruction under typoglycemia.
\newblock {\em arXiv preprint arXiv:2503.01714}, 2025.

\bibitem{du2023improving}
Yilun Du, Shuang Li, Antonio Torralba, Joshua~B Tenenbaum, and Igor Mordatch.
\newblock Improving factuality and reasoning in language models through multiagent debate.
\newblock In {\em Forty-first International Conference on Machine Learning}, 2023.

\bibitem{hong2023metagpt}
Sirui Hong, Xiawu Zheng, Jonathan Chen, Yuheng Cheng, Jinlin Wang, Ceyao Zhang, Zili Wang, Steven Ka~Shing Yau, Zijuan Lin, Liyang Zhou, et~al.
\newblock Metagpt: Meta programming for multi-agent collaborative framework.
\newblock {\em arXiv preprint arXiv:2308.00352}, 3(4):6, 2023.

\bibitem{li2023camel}
Guohao Li, Hasan Hammoud, Hani Itani, Dmitrii Khizbullin, and Bernard Ghanem.
\newblock Camel: Communicative agents for" mind" exploration of large language model society.
\newblock {\em Advances in Neural Information Processing Systems}, 36:51991--52008, 2023.

\bibitem{xie2023openagents}
Tianbao Xie, Fan Zhou, Zhoujun Cheng, Peng Shi, Luoxuan Weng, Yitao Liu, Toh~Jing Hua, Junning Zhao, Qian Liu, Che Liu, et~al.
\newblock Openagents: An open platform for language agents in the wild.
\newblock {\em arXiv preprint arXiv:2310.10634}, 2023.

\bibitem{zhang2024aflow}
Jiayi Zhang, Jinyu Xiang, Zhaoyang Yu, Fengwei Teng, Xionghui Chen, Jiaqi Chen, Mingchen Zhuge, Xin Cheng, Sirui Hong, Jinlin Wang, et~al.
\newblock Aflow: Automating agentic workflow generation.
\newblock {\em arXiv preprint arXiv:2410.10762}, 2024.

\bibitem{hu2024automated}
Shengran Hu, Cong Lu, and Jeff Clune.
\newblock Automated design of agentic systems.
\newblock {\em arXiv preprint arXiv:2408.08435}, 2024.

\bibitem{liu2023dynamic}
Zijun Liu, Yanzhe Zhang, Peng Li, Yang Liu, and Diyi Yang.
\newblock Dynamic llm-agent network: An llm-agent collaboration framework with agent team optimization.
\newblock {\em arXiv preprint arXiv:2310.02170}, 2023.

\bibitem{zhang2025multi}
Guibin Zhang, Luyang Niu, Junfeng Fang, Kun Wang, Lei Bai, and Xiang Wang.
\newblock Multi-agent architecture search via agentic supernet.
\newblock {\em arXiv preprint arXiv:2502.04180}, 2025.

\bibitem{yao2023react}
Shunyu Yao, Jeffrey Zhao, Dian Yu, Nan Du, Izhak Shafran, Karthik Narasimhan, and Yuan Cao.
\newblock React: Synergizing reasoning and acting in language models.
\newblock In {\em International Conference on Learning Representations (ICLR)}, 2023.

\bibitem{shang2024agentsquare}
Yu~Shang, Yu~Li, Keyu Zhao, Likai Ma, Jiahe Liu, Fengli Xu, and Yong Li.
\newblock Agentsquare: Automatic llm agent search in modular design space.
\newblock {\em arXiv preprint arXiv:2410.06153}, 2024.

\bibitem{wang2025scoreflow}
Yinjie Wang, Ling Yang, Guohao Li, Mengdi Wang, and Bryon Aragam.
\newblock Scoreflow: Mastering llm agent workflows via score-based preference optimization.
\newblock {\em arXiv preprint arXiv:2502.04306}, 2025.

\bibitem{shinn2023reflexion}
Noah Shinn, Federico Cassano, Ashwin Gopinath, Karthik Narasimhan, and Shunyu Yao.
\newblock Reflexion: Language agents with verbal reinforcement learning.
\newblock {\em Advances in Neural Information Processing Systems}, 36:8634--8652, 2023.

\bibitem{yao2023tree}
Shunyu Yao, Dian Yu, Jeffrey Zhao, Izhak Shafran, Tom Griffiths, Yuan Cao, and Karthik Narasimhan.
\newblock Tree of thoughts: Deliberate problem solving with large language models.
\newblock {\em Advances in neural information processing systems}, 36:11809--11822, 2023.

\bibitem{khattab2023dspy}
Omar Khattab, Arnav Singhvi, Paridhi Maheshwari, Zhiyuan Zhang, Keshav Santhanam, Sri Vardhamanan, Saiful Haq, Ashutosh Sharma, Thomas~T Joshi, Hanna Moazam, et~al.
\newblock Dspy: Compiling declarative language model calls into self-improving pipelines.
\newblock {\em arXiv preprint arXiv:2310.03714}, 2023.

\bibitem{ouyang2022training}
Long Ouyang, Jeffrey Wu, Xu~Jiang, Diogo Almeida, Carroll Wainwright, Pamela Mishkin, Chong Zhang, Sandhini Agarwal, Katarina Slama, Alex Ray, et~al.
\newblock Training language models to follow instructions with human feedback.
\newblock {\em Advances in neural information processing systems}, 35:27730--27744, 2022.

\bibitem{ethayarajh2024kto}
Kawin Ethayarajh, Winnie Xu, Niklas Muennighoff, Dan Jurafsky, and Douwe Kiela.
\newblock Kto: Model alignment as prospect theoretic optimization.
\newblock {\em arXiv preprint arXiv:2402.01306}, 2024.

\bibitem{rafailov2023direct}
Rafael Rafailov, Archit Sharma, Eric Mitchell, Christopher~D Manning, Stefano Ermon, and Chelsea Finn.
\newblock Direct preference optimization: Your language model is secretly a reward model.
\newblock {\em Advances in Neural Information Processing Systems}, 36:53728--53741, 2023.

\bibitem{ye2024justice}
Jiayi Ye, Yanbo Wang, Yue Huang, Dongping Chen, Qihui Zhang, Nuno Moniz, Tian Gao, Werner Geyer, Chao Huang, Pin-Yu Chen, et~al.
\newblock Justice or prejudice? quantifying biases in llm-as-a-judge.
\newblock {\em arXiv preprint arXiv:2410.02736}, 2024.

\bibitem{white2024livebench}
Colin White, Samuel Dooley, Manley Roberts, Arka Pal, Ben Feuer, Siddhartha Jain, Ravid Shwartz-Ziv, Neel Jain, Khalid Saifullah, Siddartha Naidu, et~al.
\newblock Livebench: A challenging, contamination-free llm benchmark.
\newblock {\em arXiv preprint arXiv:2406.19314}, 2024.

\bibitem{hendrycksmath2021}
Dan Hendrycks, Collin Burns, Saurav Kadavath, Akul Arora, Steven Basart, Eric Tang, Dawn Song, and Jacob Steinhardt.
\newblock Measuring mathematical problem solving with the math dataset.
\newblock {\em NeurIPS}, 2021.

\bibitem{chen2021evaluating}
Mark Chen, Jerry Tworek, Heewoo Jun, Qiming Yuan, Henrique Ponde De~Oliveira Pinto, Jared Kaplan, Harri Edwards, Yuri Burda, Nicholas Joseph, Greg Brockman, et~al.
\newblock Evaluating large language models trained on code.
\newblock {\em arXiv preprint arXiv:2107.03374}, 2021.

\bibitem{xu2025socialmaze}
Zixiang Xu, Yanbo Wang, Yue Huang, Jiayi Ye, Haomin Zhuang, Zirui Song, Lang Gao, Chenxi Wang, Zhaorun Chen, Yujun Zhou, et~al.
\newblock Socialmaze: A benchmark for evaluating social reasoning in large language models.
\newblock {\em arXiv preprint arXiv:2505.23713}, 2025.

\bibitem{abdin2024phi}
Marah Abdin, Jyoti Aneja, Harkirat Behl, S{\'e}bastien Bubeck, Ronen Eldan, Suriya Gunasekar, Michael Harrison, Russell~J Hewett, Mojan Javaheripi, Piero Kauffmann, et~al.
\newblock Phi-4 technical report.
\newblock {\em arXiv preprint arXiv:2412.08905}, 2024.

\bibitem{openai2024gpt4_1}
OpenAI.
\newblock Gpt-4.1 technical overview.
\newblock \url{https://openai.com/index/gpt-4-1/}, 2024.

\bibitem{wei2022chain}
Jason Wei, Xuezhi Wang, Dale Schuurmans, Maarten Bosma, Fei Xia, Ed~Chi, Quoc~V Le, Denny Zhou, et~al.
\newblock Chain-of-thought prompting elicits reasoning in large language models.
\newblock {\em Advances in neural information processing systems}, 35:24824--24837, 2022.

\bibitem{wang2022self}
Xuezhi Wang, Jason Wei, Dale Schuurmans, Quoc Le, Ed~Chi, Sharan Narang, Aakanksha Chowdhery, and Denny Zhou.
\newblock Self-consistency improves chain of thought reasoning in language models.
\newblock {\em arXiv preprint arXiv:2203.11171}, 2022.

\bibitem{madaan2023self}
Aman Madaan, Niket Tandon, Prakhar Gupta, Skyler Hallinan, Luyu Gao, Sarah W, Uri Alon, Nouha Dziri, Shrimai Prabhumoye, Yiming Yang, et~al.
\newblock Self-refine: Iterative refinement with self-feedback.
\newblock {\em Advances in Neural Information Processing Systems}, 36:46534--46594, 2023.

\bibitem{aastrom2021feedback}
Karl~Johan {\AA}str{\"o}m and Richard Murray.
\newblock {\em Feedback systems: an introduction for scientists and engineers}.
\newblock Princeton university press, 2021.

\bibitem{hong2024data}
Sirui Hong, Yizhang Lin, Bang Liu, Bangbang Liu, Binhao Wu, Ceyao Zhang, Chenxing Wei, Danyang Li, Jiaqi Chen, Jiayi Zhang, et~al.
\newblock Data interpreter: An llm agent for data science.
\newblock {\em arXiv preprint arXiv:2402.18679}, 2024.

\bibitem{hu2022lora}
Edward~J Hu, Yelong Shen, Phillip Wallis, Zeyuan Allen-Zhu, Yuanzhi Li, Shean Wang, Lu~Wang, Weizhu Chen, et~al.
\newblock Lora: Low-rank adaptation of large language models.
\newblock {\em ICLR}, 1(2):3, 2022.

\bibitem{ouyang2023structured}
Siru Ouyang, Zhuosheng Zhang, Bing Yan, Xuan Liu, Yejin Choi, Jiawei Han, and Lianhui Qin.
\newblock Structured chemistry reasoning with large language models.
\newblock {\em arXiv preprint arXiv:2311.09656}, 2023.

\end{thebibliography}

\clearpage
\appendix

\section{Notations}
\label{sec:appendix-notations}

We summarize the key notations used throughout the DyFlow framework in Table~\ref{tab:dyflow_nomenclature}. These notations cover the planning state space, operator structure, execution dynamics, and training objectives for optimizing the designer policy.

\begin{table}[h]
\centering
\small
\caption{Key notations used in the DyFlow framework.}
\label{tab:dyflow_nomenclature}
\begin{tabular}{l l}
\toprule
\textbf{Symbol} & \textbf{Definition} \\
\midrule
$p$ & A reasoning task sampled from a task distribution. \\
$s_t$ & Full system state at discrete step $t$, including task inputs and execution history. \\
$\mathcal{O}$ & A fixed set of operator templates specifying reusable functional behaviors. \\
$G_t$ & Stage subgraph constructed at step $t$: $G_t = (V_t, E_t, v_{\text{start}}^t, C_{\text{end}}^t)$. \\
$o$ & An operator instance, formally represented as $o = (O_k, \phi, \psi)$ with $O_k \in \mathcal{O}$. \\
$\phi$ & Fine-grained instruction string instantiated from template $O_k$ based on context. \\
$\psi$ & Input reference list indexing outputs from memory buffer $M$. \\
$\mathcal{M}$ & Global memory storing intermediate outputs generated by executed operators. \\
$\pi_\theta$ & Learnable planner (designer) policy that generates $G_t$ from current state summary. \\
$\theta$ & Parameters of the planner policy $\pi_\theta$ trained via SFT and KTO. \\
$f_{\text{summary}}(s_t)$ & Function that summarizes full state $s_t$ into a condensed representation. \\
$z_t$ & Summarized context at step $t$: $z_t = f_{\text{summary}}(s_t)$. \\
$\pi_{\text{exec}}$ & Fixed executor policy used to run operator instances during workflow execution. \\
$\tau$ & Full execution trajectory: $\tau = (s_0, G_0, s_1, G_1, \dots, s_T)$. \\
$L_{\text{SFT}}$ & Supervised fine-tuning loss used to initialize the planner $\pi_\theta$. \\
$D_{\text{SFT}}$ & Dataset of successful planning trajectories used for supervised training. \\
$\pi_{\text{ref}}$ & Reference planner derived from SFT, used during preference optimization. \\
$L_{\text{pref}}$ & Preference optimization loss (e.g., KTO), aligning $\pi_\theta$ with effective plans. \\
$D_{\text{pref}}$ & Labeled trajectory set used for preference-based policy refinement. \\
\bottomrule
\end{tabular}
\end{table}

\section{Theoretical Analysis}
\label{sec:theory}
\paragraph{DyFlow is never worse than static method}

We consider a distribution \(\mathcal{D}\) over reasoning tasks \(p\).  Each task is solved by executing a sequence of stage subgraphs \(\{G_t\}_{t=0}^{T-1}\), yielding return \(R(p;\pi)\).  Define~\autoref{eq:11} as objective function.
\begin{equation}
J(\pi)=\mathbb{E}_{p\sim\mathcal{D}}\bigl[R(p;\pi)\bigr]
\label{eq:11}
\end{equation}

Each reasoning task \(p\) is modeled as a finite-horizon decision process \(\langle \mathcal{S}, \mathcal{O}, P, r, T \rangle\) with bounded rewards, finite state space \(\mathcal{S}\), and operator set \(\mathcal{O}\). A static policy \(\pi_{\mathrm{stat}}\) applies the same subgraph \(G_{\mathrm{fix}}\) throughout regardless of context (i.e. $\pi_{\mathrm{stat}}(s_t)\equiv G_{\mathrm{fix}}$), while the DyFlow policy \(\pi_{\mathrm{DyFlow}}\) chooses \(G_t\) as described in \autoref{eq:12}, based on the full state \(s_t\), including intermediate outputs and error signals.
\begin{equation}
G_t \;=\;\pi_{\mathrm{DyFlow}}\bigl(f_{\mathrm{summary}}(s_t)\bigr)
\label{eq:12}
\end{equation}

%A \emph{static} policy \(\pi_{\mathrm{stat}}\) always selects a fixed subgraph \(G_{\mathrm{fix}}\) (open‐loop), 

%\begin{Assumption}
%Tasks \(p\) are finite‐horizon MDPs \(\langle\mathcal S,\mathcal O,P,r,T\rangle\) with finite state and operator sets and bounded rewards. Static policies satisfy \(\pi_{\mathrm{stat}}(s_t)\equiv G_{\mathrm{fix}}\). DyFlow policies may depend arbitrarily on \(s_t\) when choosing \(G_t\).
%\end{Assumption}

\begin{Lemma}[Static Policies as a Special Case]
\(\Pi_{\mathrm{stat}}\subseteq\Pi_{\mathrm{DyFlow}}\), as any static \(\pi_{\mathrm{stat}}\) can be implemented by DyFlow by ignoring \(s_t\) and always returning \(G_{\mathrm{fix}}\).
\label{lemma:1}
\end{Lemma}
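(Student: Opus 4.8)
\textbf{Proof proposal for Lemma~\ref{lemma:1}.}

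The plan is to exhibit an explicit embedding of the static policy class into the DyFlow policy class by constructing, for an arbitrary static policy, a DyFlow policy that produces identical behavior on every task and every trajectory. First I would fix an arbitrary $\pi_{\mathrm{stat}}\in\Pi_{\mathrm{stat}}$, so that by definition there is a single stage subgraph $G_{\mathrm{fix}}$ with $\pi_{\mathrm{stat}}(s_t)\equiv G_{\mathrm{fix}}$ for all $s_t\in\mathcal{S}$. I then define a candidate DyFlow policy $\pi'$ by the constant map $\pi'\bigl(f_{\mathrm{summary}}(s_t)\bigr) = G_{\mathrm{fix}}$ for every $s_t$; this is a well-defined element of $\Pi_{\mathrm{DyFlow}}$ because the DyFlow policy is permitted to be any measurable function of the summarized state $z_t = f_{\mathrm{summary}}(s_t)$, and a constant function trivially qualifies (it simply ignores its argument). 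The key observation is that constancy in $z_t$ entails constancy in $s_t$, so no information is lost by routing through $f_{\mathrm{summary}}$.

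Next I would argue that $\pi'$ and $\pi_{\mathrm{stat}}$ induce the same distribution over trajectories on every task $p\sim\mathcal{D}$. This follows by induction on the step index $t$: the initial state $s_0 = \{p\}$ is the same under both policies; and if the state distributions agree at step $t$, then since both policies emit the identical subgraph $G_t = G_{\mathrm{fix}}$, and the executor $\pi_{\mathrm{exec}}$, the memory update, and the transition kernel $P$ are the same in both cases, the state distributions agree at step $t+1$. Termination conditions $C_{\mathrm{end}}^t$ and the step budget $T_{\max}$ are likewise identical since they depend only on the (matching) states and subgraphs. Hence $R(p;\pi') = R(p;\pi_{\mathrm{stat}})$ for every $p$, and in particular $J(\pi') = J(\pi_{\mathrm{stat}})$. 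Since $\pi_{\mathrm{stat}}$ was arbitrary, every static policy is realized by some DyFlow policy, giving $\Pi_{\mathrm{stat}}\subseteq\Pi_{\mathrm{DyFlow}}$.

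I do not expect any serious obstacle here; the lemma is essentially a definitional containment. The only point requiring mild care is making precise that ``DyFlow policy'' means an arbitrary (measurable) function of $z_t$ rather than something syntactically constrained, so that the constant function is admissible — this should be pinned down by referencing the formalization in Section~\ref{sec:theory}, where $\pi_{\mathrm{DyFlow}}$ is defined via $G_t = \pi_{\mathrm{DyFlow}}(f_{\mathrm{summary}}(s_t))$ with no restriction on the form of the map. A secondary subtlety, worth a sentence, is that $f_{\mathrm{summary}}$ need not be injective, but this is irrelevant for the inclusion direction: discarding state information cannot prevent a policy from being constant. This lemma then serves as the base ingredient for the subsequent claim that $J(\pi_{\mathrm{DyFlow}}^\star) \ge J(\pi_{\mathrm{stat}}^\star)$, i.e.\ that optimizing over the larger class is never worse.
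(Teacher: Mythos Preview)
Your proposal is correct and follows exactly the idea the paper has in mind: the paper actually provides no separate proof for this lemma, treating the inclusion as self-evident from the definitions (the justification is embedded in the lemma statement itself). Your write-up simply spells out, more carefully than the paper does, why the constant map $\pi'(z_t)\equiv G_{\mathrm{fix}}$ is an admissible DyFlow policy and why it reproduces the static trajectory distribution; this is more detail than the paper offers, but the underlying argument is identical.
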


\begin{Theorem}[DyFlow Is Never Worse Than Static]
\begin{equation}
\max_{\pi\in\Pi_{\mathrm{DyFlow}}}J(\pi)
\; \ge \;
\max_{\pi\in\Pi_{\mathrm{stat}}}J(\pi).
\end{equation}
Moreover, if there exists a task \(p_0\) for which feedback‐driven replanning strictly improves return, then
\begin{equation}
\max_{\pi\in\Pi_{\mathrm{DyFlow}}}J(\pi)
\;>\;
\max_{\pi\in\Pi_{\mathrm{stat}}}J(\pi).
\end{equation}
\end{Theorem}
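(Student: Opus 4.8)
The plan is to derive both inequalities directly from Lemma~\ref{lemma:1}, which establishes the set inclusion $\Pi_{\mathrm{stat}}\subseteq\Pi_{\mathrm{DyFlow}}$. First I would argue the weak inequality: since $J$ is a fixed real-valued functional on policies and the feasible set for the DyFlow maximization contains the feasible set for the static maximization, the maximum over the larger set cannot be smaller. Concretely, let $\pi^{\star}_{\mathrm{stat}}\in\arg\max_{\pi\in\Pi_{\mathrm{stat}}}J(\pi)$ (existence follows from the finite state space, finite operator set, and bounded rewards, which make the static policy class effectively finite, or at least admit a maximizer). By Lemma~\ref{lemma:1}, $\pi^{\star}_{\mathrm{stat}}\in\Pi_{\mathrm{DyFlow}}$, hence it is a feasible point of the DyFlow maximization, so $\max_{\pi\in\Pi_{\mathrm{DyFlow}}}J(\pi)\ge J(\pi^{\star}_{\mathrm{stat}})=\max_{\pi\in\Pi_{\mathrm{stat}}}J(\pi)$.

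For the strict inequality, I would construct an explicit DyFlow policy that strictly beats every static policy. Suppose task $p_0$ (with $\mathbb{P}_{p\sim\mathcal{D}}(p=p_0)>0$, or more carefully, lying in a positive-measure event) admits feedback-driven replanning that strictly improves return: there is a DyFlow policy $\tilde\pi$ with $R(p_0;\tilde\pi)>R(p_0;\pi)$ for the optimal static choice on that task, while on all other tasks $\tilde\pi$ can be taken to match $\pi^{\star}_{\mathrm{stat}}$ (since DyFlow can condition on the task identity contained in $s_0$, it can replicate the static optimum off $p_0$ and switch to the improved behavior on $p_0$). Taking the expectation in Equation~\eqref{eq:11}, the contributions away from $p_0$ are unchanged while the contribution from $p_0$ strictly increases, so $J(\tilde\pi)>J(\pi^{\star}_{\mathrm{stat}})=\max_{\pi\in\Pi_{\mathrm{stat}}}J(\pi)$, and since $\tilde\pi\in\Pi_{\mathrm{DyFlow}}$ this gives $\max_{\pi\in\Pi_{\mathrm{DyFlow}}}J(\pi)\ge J(\tilde\pi)>\max_{\pi\in\Pi_{\mathrm{stat}}}J(\pi)$.

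The main obstacle is making the ``strictly improves return'' hypothesis precise enough that the hybrid construction is valid. Two subtleties need care: (i) the reference point for strict improvement must be the best static return on $p_0$ restricted to policies that are simultaneously optimal on the rest of $\mathcal{D}$ — otherwise one could object that beating a suboptimal static policy on $p_0$ is not enough. This is handled by noting that the static policy class lets each task be optimized independently (a static policy is just a single fixed subgraph, so $\max_{\pi\in\Pi_{\mathrm{stat}}}J(\pi)=\mathbb{E}_p[\max_{G_{\mathrm{fix}}}R(p;G_{\mathrm{fix}})]$ only if the same $G_{\mathrm{fix}}$ must be used across all tasks — which is the real constraint DyFlow escapes). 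Actually this asymmetry works in our favor: DyFlow's ability to pick a task-dependent subgraph already beats a single global $G_{\mathrm{fix}}$ whenever no single subgraph is simultaneously optimal everywhere, so the strict hypothesis can even be weakened. (ii) Measure-theoretic care for $\mathcal{D}$ continuous: replace ``task $p_0$'' with ``a measurable set of tasks of positive probability on which replanning strictly helps,'' and the expectation argument goes through verbatim. I would state the theorem's hypothesis in this positive-probability form to keep the proof clean.
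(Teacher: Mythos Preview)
Your proposal is correct and follows the same route as the paper: invoke Lemma~\ref{lemma:1} to get the weak inequality from set inclusion, then argue strictness via the existence of a task on which feedback-driven replanning outperforms any fixed subgraph. The paper's own proof is considerably more terse (essentially the two sentences you'd expect plus a citation to a control-theory text), so your hybrid-policy construction and your remarks on positive-measure formulation are a reasonable fleshing-out rather than a departure.
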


\begin{Proof}
Since \(\Pi_{\mathrm{stat}}\subseteq\Pi_{\mathrm{DyFlow}}\) by Lemma \ref{lemma:1}, optimizing over the larger set \(\Pi_{\mathrm{DyFlow}}\) can only increase (or match) the best static return like \autoref{eq:13}.
\begin{equation}
\max_{\pi\in\Pi_{\mathrm{DyFlow}}}J(\pi)
\;\ge\;
\max_{\pi\in\Pi_{\mathrm{stat}}}J(\pi)
\label{eq:13}
\end{equation}
In stochastic or error‐prone environments, a closed‐loop DyFlow policy can correct deviations in real time—e.g.\ via replanning—yielding strictly higher return on some task \(p_0\) \citep{aastrom2021feedback}.  Hence the inequality becomes strict.
\end{Proof}

\paragraph{Convergence proof of DyFlow}
We model each reasoning task \(p\) as a finite‐horizon MDP of length \(T\).  For \(t=0,1,\dots,T\), let $
V^*_t(s),\quad V^{\rm Dy}_t(s)$
denote the optimal and DyFlow value functions when there are \(t\) steps remaining from state \(s\).  At step \(t\), DyFlow observes \(s_t\), computes the summary $
z_t = f_{\mathrm{summary}}(s_t),$
and samples a subgraph $
G_t \sim \pi_{\mathrm{DyFlow}}(\,\cdot\mid z_t)\,.
$
Executing \(G_t\) yields reward \(r(s_t,G_t)\) and transitions to \(s_{t+1}\).  

One‐step Bellman operator can be operated for such MDP process with DyFlow one-step lookup feature, denoted as \autoref{eq:19}
\begin{equation}
(\mathcal T_t V)(s;G)
\;=\;
r(s,G)\;+\;\mathbb{E}_{s'\sim P(\cdot\mid s,G)}\bigl[V(s')\bigr]
\label{eq:19}
\end{equation}
and quantify DyFlow’s per‐step suboptimality by the Bellman residual
\begin{equation}
\delta_t
\;=\;
\max_{s}\Bigl|\underbrace{\max_{G}\mathcal T_t V^{\rm Dy}_{\,t-1}(s;G)}_{\text{optimal backup at step }t}
\;-\;\underbrace{\mathcal T_t V^{\rm Dy}_{\,t-1}(s;G_t)}_{\text{DyFlow’s backup at step }t}\Bigr|.
\label{eq:bellmanresidual}
\end{equation}
Let \(\varepsilon_t\ge\delta_t\) be a uniform upper bound on this residual.

%Due to DyFlow’s core at each step is exactly an approximate one‐step lookahead~\citep{franccois2018introduction}, we analyze its performance by quantifying the suboptimality of its greedy subgraph selection. We define the Bellman residual at time step \(t\) as:
%\begin{equation}
%\delta_t \;=\;\max_{s}\Bigl|\underbrace{\max_{G}\mathcal{T}V_{\rm Dy}(s;G)}_{\text{optimal one‐step backup}}
%\;-\;\underbrace{\mathcal{T}V_{\rm Dy}(s;G_t)}_{\text{DyFlow’s one‐step backup}}\Bigr| \quad \mathcal{T}V=\text{one-step backup}
%\label{eq:bellmanresidual}
%\end{equation}
%which captures the value loss from not selecting the optimal subgraph at step \(t\). Let \(\varepsilon_t \ge \delta_t\) be a the upper bound on this residual. 

%The final performance bound can be derived by standard Bellman error propagation:, we impose a mild assumption~\ref{assump1} here to modulate DyFlow generation follows Bellman residual equation. Using $\mathcal{T}V$ to describe transition of one‐step Bellman backup and \(\;G^*(s_t)=\arg\max_G\mathcal{T}V(s_t;G)\) as optimal designer picks, we can eventually prove Theorem \ref{theorem:dyflow-convergence}.

%\begin{Assumption}
%At each step \(t\) and state \(s\), DyFlow’s chosen subgraph \(G_t\) incurs a Bellman residual like in \autoref{eq:bellman}.and there exists \(\varepsilon_t\) such that \(\delta_t \le \varepsilon_t\)  for all \(t=0,\dots,T-1\)
%\label{assump1}
%\end{Assumption}

\begin{Lemma}[Error propagation]
For any \(t=0,1,\dots,T\) and state \(s\), the gap between the optimal and DyFlow value functions with \(t\) steps remaining satisfies \autoref{eq:110}.
\begin{equation}
V^*_t(s)\;-\;V^{\rm Dy}_t(s)
\;\le\;
\sum_{k=1}^{t}\varepsilon_k
\label{eq:110}
\end{equation}
\label{lemma:bellman-propagation}
\end{Lemma}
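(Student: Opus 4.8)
The plan is to prove Lemma~\ref{lemma:bellman-propagation} by induction on the number of remaining steps $t$, using the Bellman residual bound $\varepsilon_k$ to control the error introduced at each stage. The base case $t=0$ is immediate: with zero steps remaining, both value functions equal the terminal reward (or zero), so $V^*_0(s) - V^{\rm Dy}_0(s) = 0 \le \sum_{k=1}^{0}\varepsilon_k = 0$.

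For the inductive step, I would assume the bound $V^*_{t-1}(s') - V^{\rm Dy}_{t-1}(s') \le \sum_{k=1}^{t-1}\varepsilon_k$ holds uniformly over all states $s'$, and establish it for $t$. The key is to insert an intermediate quantity: the value obtained by applying the \emph{optimal one-step backup} to the \emph{DyFlow continuation} $V^{\rm Dy}_{t-1}$. Write
\begin{equation}
V^*_t(s) - V^{\rm Dy}_t(s)
= \Bigl(V^*_t(s) - \max_G \mathcal{T}_t V^{\rm Dy}_{t-1}(s;G)\Bigr)
+ \Bigl(\max_G \mathcal{T}_t V^{\rm Dy}_{t-1}(s;G) - \mathcal{T}_t V^{\rm Dy}_{t-1}(s;G_t)\Bigr).
\end{equation}
The second bracket is exactly $\delta_t$ by the definition in~\eqref{eq:bellmanresidual}, hence at most $\varepsilon_t$. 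For the first bracket, I would use that $V^*_t(s) = \max_G \mathcal{T}_t V^*_{t-1}(s;G)$ (the optimal Bellman equation), then note that $\mathcal{T}_t$ is monotone in its value argument — since $\mathcal{T}_t V(s;G) = r(s,G) + \mathbb{E}_{s'\sim P(\cdot\mid s,G)}[V(s')]$ depends on $V$ only through a nonnegatively-weighted expectation — so $V^*_{t-1} \ge V^{\rm Dy}_{t-1}$ pointwise (itself a consequence of the inductive hypothesis plus nonnegativity of $\varepsilon_k$) gives $\max_G \mathcal{T}_t V^*_{t-1}(s;G) - \max_G \mathcal{T}_t V^{\rm Dy}_{t-1}(s;G) \le \max_G \mathbb{E}_{s'}[V^*_{t-1}(s') - V^{\rm Dy}_{t-1}(s')] \le \sum_{k=1}^{t-1}\varepsilon_k$. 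Adding the two bounds yields $V^*_t(s) - V^{\rm Dy}_t(s) \le \sum_{k=1}^{t-1}\varepsilon_k + \varepsilon_t = \sum_{k=1}^{t}\varepsilon_k$, closing the induction.

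The main obstacle is the telescoping bound on the first bracket: one must be careful that subtracting two maxima is controlled by the max of the difference, which requires the standard inequality $\max_G a(G) - \max_G b(G) \le \max_G (a(G) - b(G))$, and then that the difference $\mathcal{T}_t V^*_{t-1}(s;G) - \mathcal{T}_t V^{\rm Dy}_{t-1}(s;G)$ reduces cleanly to an expectation of the continuation gap because the reward terms $r(s,G)$ cancel. The monotonicity/contraction-style argument also implicitly needs the continuation gap to be bounded uniformly over states (not just in expectation under one particular transition kernel), which is why the inductive hypothesis is phrased with $\max_s$; I would state it that way from the outset. A minor point worth flagging is the direction of the inequality — we only need the one-sided bound $V^* \ge V^{\rm Dy}$ up to the accumulated residual, not a two-sided estimate — so no absolute values are needed in the propagation once the residual $\delta_t$ itself is defined with an absolute value.
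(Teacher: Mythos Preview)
Your proof is correct and follows essentially the same approach as the paper: induction on $t$, the same base case, the same insertion of the intermediate term $\max_G \mathcal{T}_t V^{\rm Dy}_{t-1}(s;G)$, and the same two bounds (Bellman residual $\le \varepsilon_t$ for the second bracket, inductive hypothesis via the $\max_G a(G) - \max_G b(G) \le \max_G(a(G)-b(G))$ inequality for the first). Your write-up is actually more explicit than the paper's in justifying the max-of-difference step; one small quibble is that the parenthetical ``$V^*_{t-1}\ge V^{\rm Dy}_{t-1}$ \ldots\ a consequence of the inductive hypothesis plus nonnegativity of $\varepsilon_k$'' is misattributed (that inequality holds by optimality of $V^*$, not from the inductive hypothesis), but it is not actually used in your chain of inequalities, so the argument stands.
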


\begin{proof}
\noindent\textbf{Case 1.} With zero steps remaining, both \(V^*_0(s)\) and \(V^{\rm Dy}_0(s)\) equal the terminal reward (here normalized to 0), so the inequality holds.

\noindent\textbf{Case 2.} Assume the claim for \(t-1\).  Then for horizon \(t\),
\[
V^*_t(s)-V^{\rm Dy}_t(s)
=\max_{G}\mathcal T_t V^*_{t-1}(s;G)
\;-\;\mathcal T_t V^{\rm Dy}_{t-1}(s;G_t).
\]
Split into two parts:
\[
\underbrace{\max_{G}\mathcal T_t V^{\rm Dy}_{t-1}(s;G)
-\mathcal T_t V^{\rm Dy}_{t-1}(s;G_t)}_{\le\,\delta_t\le\varepsilon_t}
\;+\;
\underbrace{\max_{G}\mathcal T_t V^*_{t-1}(s;G)
-\max_{G}\mathcal T_t V^{\rm Dy}_{t-1}(s;G)}_{\le\,\sup_{s'}\bigl[V^*_{t-1}(s')-V^{\rm Dy}_{t-1}(s')\bigr]}.
\]
By the inductive hypothesis,
\(\sup_{s'}\bigl[V^*_{t-1}(s')-V^{\rm Dy}_{t-1}(s')\bigr]\le\sum_{k=1}^{t-1}\varepsilon_k\).
Hence
\[
V^*_t(s)-V^{\rm Dy}_t(s)
\;\le\;
\varepsilon_t
\;+\;
\sum_{k=1}^{t-1}\varepsilon_k
\;=\;
\sum_{k=1}^{t}\varepsilon_k,
\]
completing the induction.
\end{proof}

\begin{Theorem}[DyFlow performance bound]
Starting from initial state \(s_0\) with full horizon \(T\) along with Lemma~\ref{lemma:bellman-propagation},
\[
V^*_T(s_0)\;-\;V^{\rm Dy}_T(s_0)
\;\le\;
\sum_{k=1}^T\varepsilon_k
\;\le\;
T\,\max_{1\le k\le T}\varepsilon_k.
\]
\label{theorem:dyflow-convergence}
\end{Theorem}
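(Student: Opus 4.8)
The plan is to obtain Theorem~\ref{theorem:dyflow-convergence} as a direct corollary of Lemma~\ref{lemma:bellman-propagation}, so the work is almost entirely bookkeeping once the lemma is in hand. First I would instantiate the error-propagation bound at the full horizon: setting $t = T$ and $s = s_0$ in \autoref{eq:110} immediately yields
\begin{equation}
V^*_T(s_0) - V^{\rm Dy}_T(s_0) \;\le\; \sum_{k=1}^{T}\varepsilon_k,
\end{equation}
which is the first inequality of the theorem. This step requires nothing beyond checking that the lemma's hypotheses — a finite-horizon MDP of length $T$, bounded rewards, and a uniform per-step Bellman-residual bound $\varepsilon_k \ge \delta_k$ — are exactly the standing assumptions already in force for $\pi_{\mathrm{DyFlow}}$.

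Second, I would bound the sum by its largest term times the number of terms: since each $\varepsilon_k \le \max_{1\le k\le T}\varepsilon_k$ and there are $T$ summands, $\sum_{k=1}^{T}\varepsilon_k \le T\,\max_{1\le k\le T}\varepsilon_k$. Chaining this with the previous display gives the second inequality and completes the proof. Optionally I would remark that the bound degrades gracefully: if the designer is trained so that every per-step residual is uniformly small, say $\varepsilon_k \le \varepsilon$, then the suboptimality gap is at most $T\varepsilon$, i.e. DyFlow is $T\varepsilon$-optimal over the task distribution; and in the idealized case $\varepsilon_k \to 0$ (the designer achieves the optimal one-step backup at every stage), the gap vanishes and $V^{\rm Dy}_T(s_0) \to V^*_T(s_0)$, which is the sense in which DyFlow "converges."

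There is essentially no hard part here — the only thing to be careful about is the direction of the inequalities (the lemma is stated as $V^*_t - V^{\rm Dy}_t \le \cdots$, so no absolute values are needed and the telescoping in the lemma's induction already did the real work) and making sure the terminal-reward normalization used in Case~1 of the lemma's proof is consistent with the value functions appearing in the theorem statement. If anything, the "obstacle" is purely expository: one should state clearly that $\varepsilon_k$ is an assumed, not derived, quantity, so the theorem is a conditional performance guarantee — its strength is only as good as the per-step planning quality the training procedure in Section~3.3 can deliver.
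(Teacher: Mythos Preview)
Your proposal is correct and matches the paper's approach: the theorem is stated as an immediate corollary of Lemma~\ref{lemma:bellman-propagation} (the paper does not even give a separate proof, just invokes the lemma in the theorem statement), and the two inequalities follow exactly as you describe—instantiate the lemma at $t=T$, $s=s_0$, then bound the sum by $T$ times its largest term.
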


\section{Operators}
\label{appendix:modular_operators}

DyFlow employs modular and reusable operator templates to support flexible reasoning workflows. Each operator defines a specific functional role and is instantiated dynamically by the planner during execution. We summarize all templates used in DyFlow and analyze their usage patterns across reasoning domains.

\subsection{Operator Templates}
\label{appendix:operator_templates}

Table~\ref{tab:operator_templates} summarizes all operator templates used by DyFlow. Each template corresponds to a modular and reusable functional unit. During execution, the designer instantiates these templates with fine-grained instructions (\(\phi\)) and dynamic inputs (\(\psi\)) based on the current context.

\begin{table}[h]
\centering
\caption{DyFlow operator templates and their functional roles.}
\label{tab:operator_templates}
\begin{tabular}{ll}
\toprule
\textbf{Template Name} & \textbf{Description} \\
\midrule
\texttt{GENERATE\_PLAN} & Propose a high-level plan for solving the current subgoal. \\
\texttt{DECOMPOSE\_PROBLEM} & Break a complex goal into subgoals. \\
\texttt{GENERATE\_ANSWER} & Produce an answer candidate for the current task. \\
\texttt{REVIEW\_SOLUTION} & Evaluate correctness or completeness of a prior answer. \\
\texttt{REFINE\_ANSWER} & Modify or improve a previously generated answer. \\
\texttt{GENERATE\_CODE} & Write code to solve the current subgoal. \\
\texttt{REFINE\_CODE} & Improve or debug previously generated code. \\
\texttt{ORGANIZE\_SOLUTION} & Summarize or structure the final answer for output. \\
\texttt{ENSEMBLE} & Aggregate multiple reasoning paths using voting. \\
\texttt{DEFAULT} & General-purpose fallback operator. \\
\texttt{TERMINATE} & End the workflow once the solution is complete. \\
\bottomrule
\end{tabular}
\end{table}

\subsection{Operator Usage Analysis}
\label{sec:appendix-operator-usage}

Figure~\ref{fig:operator-usage-bar} summarizes the frequency of operator usage across different reasoning domains. We observe that operators like \texttt{REVIEW\_SOLUTION}, \texttt{TERMINATE}, and \texttt{ORGANIZE\_SOLUTION} appear frequently across all tasks, reflecting their general-purpose utility. In contrast, operators such as \texttt{DECOMPOSE\_PROBLEM}, \texttt{REFINE\_ANSWER}, and \texttt{GENERATE\_PLAN} are more selectively used in complex domains like LiveBench and SocialMaze, where dynamic adaptation and structure restructuring are more critical.

\begin{figure}[htbp]
  \centering
  \includegraphics[width=\linewidth]{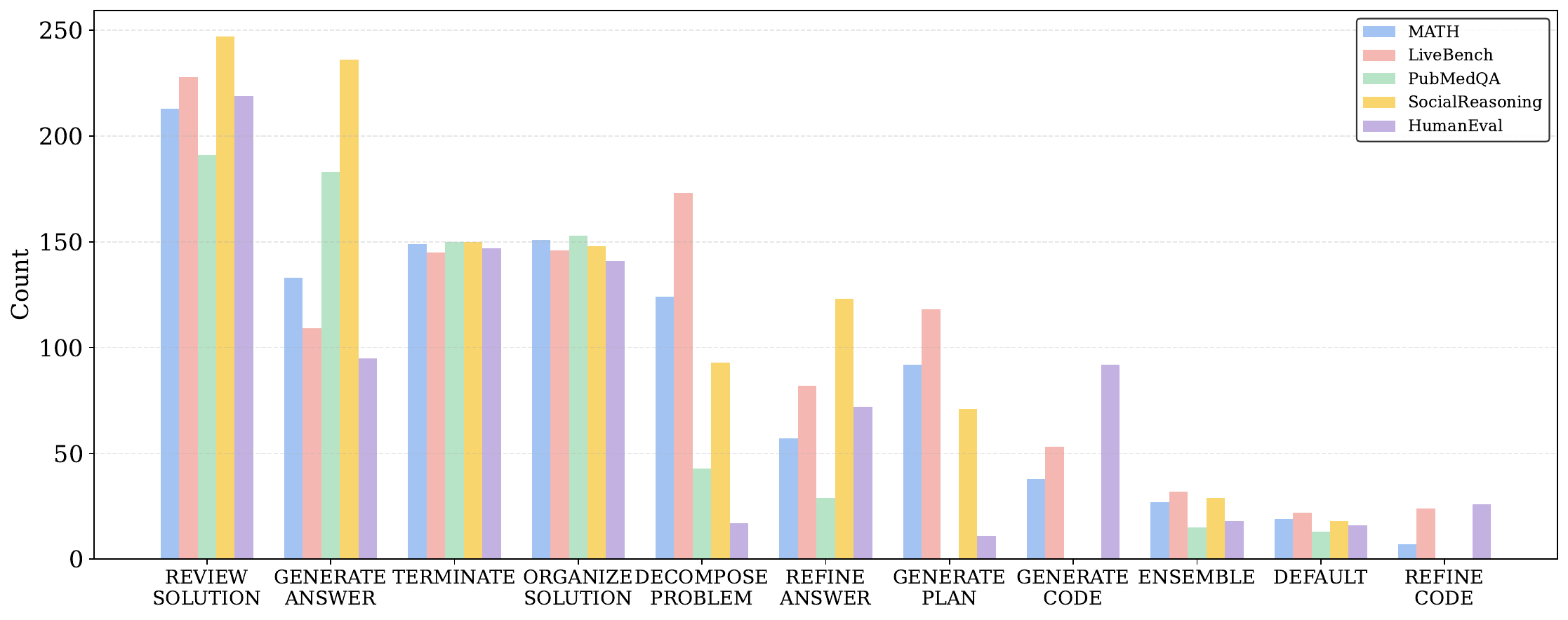}
  \caption{
  Operator usage frequency across reasoning domains. 
  Task-specific patterns reflect how DyFlow dynamically adjusts workflow structure based on domain complexity and intermediate feedback.
  }
  \label{fig:operator-usage-bar}
\end{figure}

\section{Experiment Details}
\label{appendix:implementation_details}

\subsection{Dataset Statistics}
\label{appendix:dataset_stats}

Table~\ref{tab:dataset_stats_appendix} summarizes the dataset statistics used in our experiments. We adopt a default TRAIN:TEST split ratio of approximately 1:3 across all datasets to balance supervision and evaluation coverage. For the MATH benchmark, we follow the setting in MaAS~\cite{hong2024data}, selecting problems at difficulty level 5 across four representative categories: Combinatorics \& Probability, Number Theory, Pre-algebra, and Pre-calculus. For HumanEval, when it is not included in the training set, we evaluate on the full 164 problems to ensure consistency with prior work.

\begin{table}[h]
\centering
\caption{Train/test statistics for each dataset.}
\label{tab:dataset_stats_appendix}
\begin{tabular}{l|l|c|c|l}
\toprule
\textbf{Domain} & \textbf{Dataset} & \textbf{\#Train} & \textbf{\#Test} & \textbf{Metric} \\
\midrule
Social Reasoning & SocialMaze   & 87  & 262 & Accuracy \\
Medical Reasoning & PubMedQA     & 84  & 251 & Accuracy \\
Math Reasoning    & MATH         & 84  & 250 & Accuracy \\
Logic Reasoning   & LiveBench    & 50  & 150 & Accuracy \\
Code Reasoning    & HumanEval    & 55  & 109 & pass@1   \\
\bottomrule
\end{tabular}
\end{table}

\subsection{Designer Training}
We use Phi-4~\cite{abdin2024phi} as the designer policy \(\pi_\theta\), trained on 2 Nvidia A6000 GPUs with LoRA-based parameter-efficient tuning \cite{hu2022lora}. Training proceeds in two stages. First, supervised fine-tuning is performed on 1.5k design results from MATH, PubMedQA, and Livebench, using a cutoff length of 2048, batch size 1 with gradient accumulation steps of 4, learning rate \(5 \times 10^{-6}\), cosine learning rate scheduler with warmup ratio 0.1, bf16 precision, and 3 training epochs. The validation split is 10\%.

Second, we apply KTO~\cite{ethayarajh2024kto} for preference-based refinement using 2k design results with a 1:1 positive-to-negative ratio, labeled based on task success. This stage uses a cutoff length of 4096, batch size 1 with gradient accumulation steps of 8, learning rate \(2 \times 10^{-4}\), KL penalty \(\beta = 0.1\), bf16 precision, cosine scheduler, and 3 epochs. Validation is performed every 500 steps. Each trajectory includes a task description, the planned subgraphs, intermediate operator outputs, and the final answer.

\subsection{Inference and Evaluation}
At inference time, both the designer and executor are run with temperature 0.01 to ensure deterministic outputs. All methods use Phi-4 to ensure fair comparison.

\begin{figure}[htbp]
  \centering
  \includegraphics[width=\linewidth]{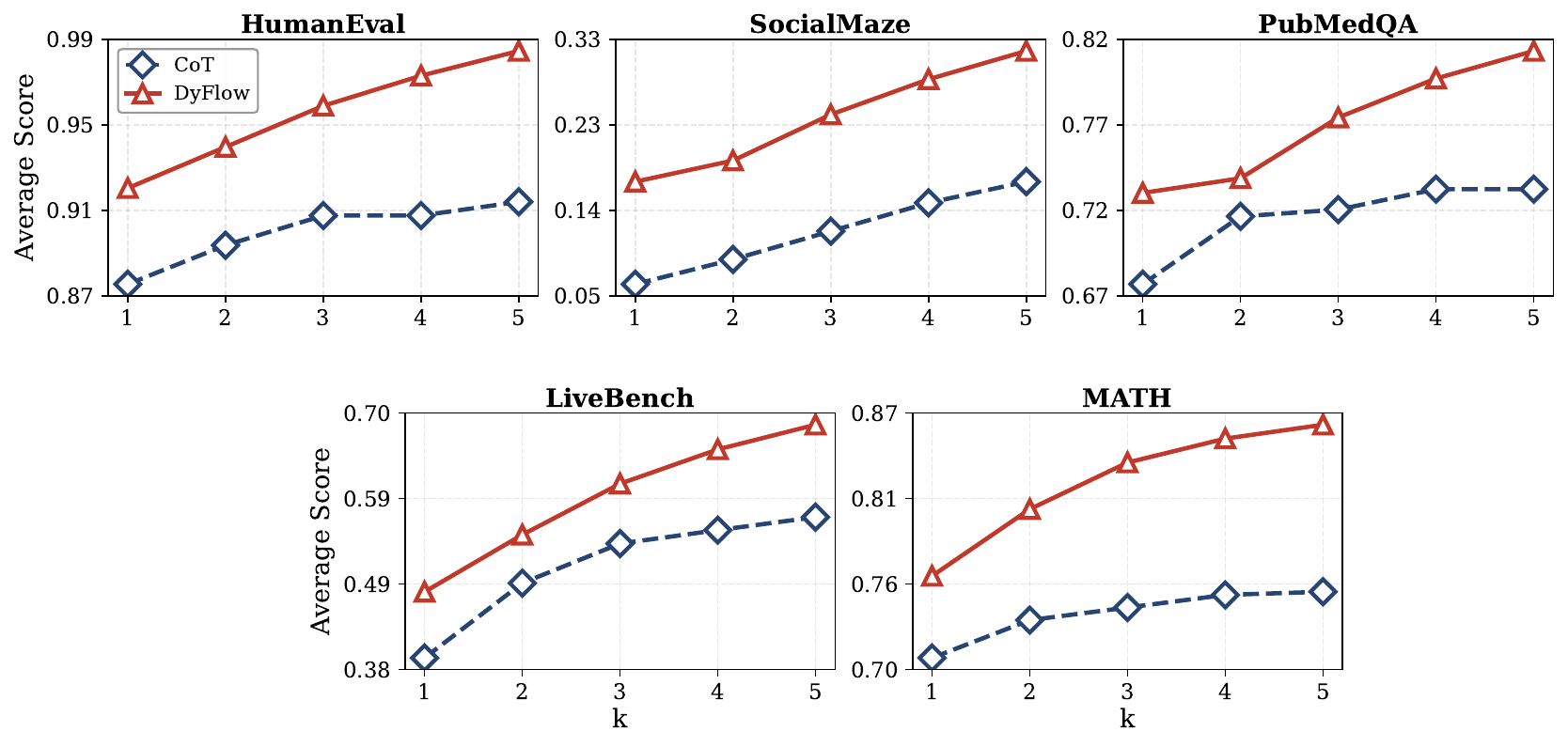}
  \caption{
  Per-benchmark Pass@k comparisons between DyFlow and CoT across five reasoning domains.
  DyFlow consistently achieves better accuracy under all $k$ values in each task.
  }
  \label{fig:appendix_passk}
\end{figure}

\subsection{Cost Analysis}
\label{sec:cost-analysis}

We report the cost of DyFlow in both training and inference, and compare it with baseline systems using proprietary or open-weight models. As shown in Table~\ref{tab:cost-analysis}, DyPlanner achieves the lowest inference cost across all domains while maintaining comparable or superior performance to larger proprietary designers such as GPT-4.1 and Claude-3.7. Despite being based on the compact Phi-4 model, DyPlanner enables DyFlow to match or outperform these stronger backbones in structured reasoning tasks. This confirms the effectiveness of our training pipeline, which distills high-quality planning behavior into a lightweight designer with minimal runtime cost.

\begin{table*}[htbp]
\centering
\small
\caption{Cost and performance comparison on the MATH dataset. DyFlow is trained only on MATH, PubMedQA, and LiveBench; Dyflow training cost is computed using only the MATH portion. \textbf{Bold} marks the best (lowest for cost/tokens, highest for accuracy).}
\label{tab:math_cost}
\begin{threeparttable}
\begin{tabular}{
l
c c c
c c c
>{\centering\arraybackslash}m{1.4cm}
}
\toprule
\multirow{2}{*}{\textbf{Method}} &
\multicolumn{3}{c}{\textbf{Training}} &
\multicolumn{3}{c}{\textbf{Inference}} &
\multirow{2}{*}{\textbf{Acc. (\%)}} \\
\cmidrule(lr){2-4} \cmidrule(lr){5-7}
& \textbf{Prompt} & \textbf{Comp.} & \textbf{Cost (\$)}
& \textbf{Prompt} & \textbf{Comp.} & \textbf{Cost (\$)}
& \\
\midrule
\rowcolor{rowlight}
LLM-Debate   & -- & -- & --
& 1,449,574 & 4,859,777 & 0.78 & 72.40 \\
\rowcolor{rowdark}
Self-Refine  & -- & -- & --
& 2,498,569 & 1,768,407 & 0.42 & 70.40 \\
\rowcolor{rowlight}
AFlow        & 13,773,792 & 13,240,624 & 2.82
& 1,208,685 & 895,017 & 0.21 & 74.00 \\
\rowcolor{rowdark}
MaAS         & 1,445,257 & 1,011,317 & \textbf{0.24}
& \textbf{528,145} & \textbf{419,009} & \textbf{0.10} & 73.60 \\[0.8ex]
\rowcolor{dyflowbg}
\textbf{DyFlow (D.)} & \textbf{788,910} & \textbf{297,669} & 3.96
& 773,625 & 225,737 & 0.09 &
\multirow{2}{*}{\textbf{76.40}} \\[0.8ex]
\rowcolor{dyflowbg}
\textbf{DyFlow (E.)} & \textbf{617,560} & \textbf{452,044} & 0.11
& 1,288,918 & 881,452 & 0.21 & \\
\bottomrule
\end{tabular}
\end{threeparttable}
\end{table*}

We further analyze the full pipeline cost in Table~\ref{tab:math_cost}, including both designer and executor token usage during training and inference on the MATH benchmark. Although DyFlow incurs additional training cost due to initial distillation from GPT-4.1, it achieves the highest performance among all methods. During inference, DyFlow’s total token cost is moderately higher than AFlow and MaAS (approximately 1.4x to 3x), owing to its two-stage designer–executor structure. However, this cost remains substantially lower than prompt-based methods such as LLM-Debate and Self-Refine, which lack structural planning. These results suggest that DyFlow offers a favorable trade-off: higher reasoning accuracy and generalization, with only modest overhead compared to other workflow-based systems.

\begin{table}[htbp]
    \small
    \centering
    \caption{Input tokens, output tokens, and computational costs of DyFlow with different designer models across five reasoning domains.}
    \label{tab:cost-analysis}
    \begin{tabular}{l l r r r}
    \toprule
    \textbf{Benchmark} & \textbf{Designer} & \textbf{Input Tokens} & \textbf{Output Tokens} & \textbf{Cost (USD)} \\
    \midrule
    \multirow{3}{*}{HumanEval}
        & GPT-4.1              & 517575  & 211600  & 2.73 \\
        & Claude-3.7-Sonnet    & 511347  & 205813  & 4.62 \\
        & \textbf{DyPlanner}   & 505603  & 199355  & \textbf{0.06} \\
    \midrule
    \multirow{3}{*}{LiveBench}
        & GPT-4.1              & 815952  & 275565  & 3.84 \\
        & Claude-3.7-Sonnet    & 803211  & 271937  & 6.49 \\
        & \textbf{DyPlanner}   & 791525  & 300402  & \textbf{0.10} \\
    \midrule
    \multirow{3}{*}{MATH}
        & GPT-4.1              & 788910  & 297669  & 3.96 \\
        & Claude-3.7-Sonnet    & 781589  & 253471  & 6.15 \\
        & \textbf{DyPlanner}   & 773625  & 225737  & \textbf{0.09} \\
    \midrule
    \multirow{3}{*}{PubMedQA}
        & GPT-4.1              & 911364  & 302684  & 4.24 \\
        & Claude-3.7-Sonnet    & 887653  & 289117  & 7.00 \\
        & \textbf{DyPlanner}   & 588259  & 213749  & \textbf{0.07} \\
    \midrule
    \multirow{3}{*}{SocialMaze}
        & GPT-4.1              & 1499141 & 436371  & 6.49 \\
        & Claude-3.7-Sonnet    & 1387923 & 412689  & 10.35 \\
        & \textbf{DyPlanner}   & 905088  & 300554  & \textbf{0.11} \\
    \midrule
    \textbf{Total}
        & GPT-4.1              & 4532942 & 1523889 & 21.26 \\
        & Claude-3.7-Sonnet    & 4371723 & 1433027 & 34.61 \\
        & \textbf{DyPlanner}   & 3564100 & 1239797 & \textbf{0.42} \\
    \bottomrule
    \end{tabular}
\end{table}

\subsection{Cross-Executor Performance Details}
We provide detailed performance comparisons of CoT and DyFlow across three executor models and five reasoning tasks in Table~\ref{tab:appendix-cross-executor}. DyFlow consistently improves performance over CoT under all configurations, demonstrating its robustness to executor variation. The gains are particularly notable for more lightweight models such as Phi-4 and GPT-4o-mini, where CoT struggles with reasoning consistency. In contrast, DyFlow's explicit planning compensates for executor limitations, leading to substantial improvements in domains such as SocialMaze and MATH.
\label{sec:appendix-cross-executor}

\begin{table}[htbp]
    \centering
    \small
    \caption{Detailed scores of CoT and DyFlow across three executor models and five reasoning tasks. Best results in each column are highlighted in \textbf{bold}.}
    \label{tab:appendix-cross-executor}
    \begin{threeparttable}
    \rowcolors{2}{rowlight}{rowdark}
    \begin{tabular}{
        >{\columncolor{headerbg}}l
        >{\columncolor{headerbg}}c
        >{\columncolor{headerbg}}c
        >{\columncolor{headerbg}}c
        >{\columncolor{headerbg}}c
        >{\columncolor{headerbg}}c
    }
        \toprule
        \textbf{Model} & \textbf{SocialMaze} & \textbf{PubMedQA} & \textbf{MATH} & \textbf{LiveBench} & \textbf{HumanEval} \\
        \midrule
        GPT-4o-mini (CoT)     & 3.05  & 69.72  & 64.80  & 32.67  & 87.20 \\
        GPT-4o-mini (DyFlow)  & \textbf{3.82} & \textbf{70.52} & \textbf{65.20} & \textbf{32.67} & \textbf{91.46} \\
        \midrule
        Phi-4 (CoT)           & 6.11  & 67.73  & 71.08  & 39.33  & 87.80 \\
        Phi-4 (DyFlow)        & \textbf{17.18} & \textbf{72.91} & \textbf{76.40} & \textbf{48.67} & \textbf{92.07} \\
        \midrule
        GPT-4.1-mini (CoT)    & 20.99 & 75.30  & 89.20  & 66.00  & 94.51 \\
        GPT-4.1-mini (DyFlow) & \textbf{42.75} & \textbf{77.29} & \textbf{92.80} & \textbf{71.33} & \textbf{96.95} \\
        \bottomrule
    \end{tabular}
    \end{threeparttable}
\end{table}

\section{Limitations}

Although DyFlow demonstrates strong generalization and reasoning stability across a variety of tasks, its current design has a notable limitation: the lack of integration with external tools and APIs. Specifically, DyFlow’s operator set is primarily built for symbolic reasoning (e.g., mathematical derivation, code execution) and textual reasoning (e.g., question answering), and does not yet support operations such as search engine access, database queries, or environment interaction. This limitation can lead to performance bottlenecks in tasks that require complex tool-assisted reasoning.

It is worth noting, however, that in most of the tasks we evaluate, performance bottlenecks arise mainly from subgoal planning or logical reasoning errors rather than from missing external knowledge or tool usage, which is consistent with observations in prior research~\cite{ouyang2023structured}. As such, within the scope of our current evaluation, DyFlow’s structured planning and feedback mechanisms already lead to significant performance gains, and the lack of tool integration has only limited impact.

Nevertheless, as tasks continue to expand into more complex domains—such as web-based question answering, embodied interaction, and multimodal reasoning—the demand for external tool use is expected to increase. Future work may explore abstracting tool operations as special operator types and extending the executor interface, thereby enhancing DyFlow’s generality and applicability in real-world, complex environments.
\section{Case Study}
\label{appendix:case_studies}

\begin{figure}
    \centering
    \begin{tcolorbox}[outerbox, title=Case Study (MATH)]

  \begin{tcolorbox}[innerbox, title=Problem Description]
  \textbf{Task:} Given $AB = 10$, $AC = b$ where $b > 10$, and $\sin B = \frac{3}{5}$, compute the positive difference between the two possible values of $BC$.
  \end{tcolorbox}

  \begin{tcolorbox}[innerbox, title=CoT Answer by Phi-4]
  \textbf{Step 1:} Recognize that $\sin B = \frac{3}{5} \Rightarrow \cos B = \pm \frac{4}{5}$. \\
  \textbf{Step 2:} Proceed with $\cos B = \frac{4}{5}$ (acute angle). \\
  \textbf{Step 3:} Apply Law of Cosines to compute $BC_1$ and output result. \\
  \textbf{Critical Flaw:} The solution ignores the second case ($\cos B = -\frac{4}{5}$), which leads to an incomplete answer. \\
  \textbf{Final Answer:} 6 \hfill \textbf{Incorrect}
  \end{tcolorbox}

  \begin{tcolorbox}[innerbox, title=DyFlow Answer by Phi-4]
  \textbf{Stage 1: Planning and Decomposition} \\
  Operator: \texttt{DECOMPOSE\_PROBLEM} \\
  Output: Identifies $\cos B = \pm \frac{4}{5}$ as two geometric branches

  \medskip
  \textbf{Stage 2: Partial Answer Generation + Initial Review} \\
  Operator: \texttt{GENERATE\_ANSWER} — Computes $BC_1$ for $\cos B = \frac{4}{5}$ \\
  Operator: \texttt{GENERATE\_ANSWER} — Computes $BC_2$ for $\cos B = -\frac{4}{5}$ \\
  Operator: \texttt{REVIEW\_SOLUTION} — Aggregates both results, checks completeness

  \medskip
  \textbf{Stage 3: Refinement and Re-evaluation} \\
  Operator: \texttt{REFINE\_ANSWER} — Corrects inconsistency or adds missing branches \\
  Operator: \texttt{REVIEW\_SOLUTION} — Validates the refined answer

  \medskip
  \textbf{Stage 4: Finalization} \\
  Operator: \texttt{ORGANIZE\_SOLUTION}, \texttt{TERMINATE}

  \medskip
  \textbf{Final Answer:} \textbf{16} \hfill \textbf{Correct}
  \end{tcolorbox}

\end{tcolorbox}
    
\end{figure}

\begin{figure}[t]
    \centering
    \begin{tcolorbox}[outerbox, title=Case Study (Livebench)]
        \begin{tcolorbox}[innerbox, title=Problem Description]
            \textbf{Task:} How many ordered pairs of positive integers $(m,n)$ satisfy $\gcd(m,n) = 2$ and $\mathop{\text{lcm}}[m,n] = 108$? \\
            \textbf{Ground Truth: 4}
        \end{tcolorbox}
        
        \vspace{0.5em}
        
        \begin{tcolorbox}[innerbox, title=CoT Answer by Phi-4]
            \textbf{Steps:}
            \begin{itemize}
                \item Use identity $mn = \gcd(m,n) \cdot \mathop{\text{lcm}}[m,n] \Rightarrow mn = 216$.
                \item Let $m = 2a$, $n = 2b$ with $\gcd(a,b) = 1 \Rightarrow ab = 54$.
                \item Only considers $(a,b) = (1,54), (54,1)$, which gives $(m,n) = (2,108), (108,2)$.
            \end{itemize}
            \textbf{Flaw:} Misses other coprime factorizations of $54$, such as $(2,27)$ and $(27,2)$, leading to incomplete counting. \\
            \textbf{Final Answer:} 2 \hfill \textbf{Incorrect}
        \end{tcolorbox}
        
        \vspace{0.5em}
        
        \begin{tcolorbox}[innerbox, title=DyFlow Answer by Phi-4]
            \textbf{Stage 1: Decomposition and Initial Attempt} \\
            \texttt{DECOMPOSE\_PROBLEM}, \texttt{GENERATE\_ANSWER}, \texttt{REVIEW\_SOLUTION} \\
            Uses $m = 2a$, $n = 2b$ with $\gcd(a,b)=1$ and $ab=54$, and systematically identifies all valid $(a,b)$ pairs by distributing the prime powers $2^1$ and $3^3$ disjointly between $a$ and $b$.
            
            \textbf{Valid $(a,b)$ pairs:} $(1,54)$, $(2,27)$, $(27,2)$, $(54,1)$ \\
            \textbf{Corresponding $(m,n)$:} $(2,108)$, $(4,54)$, $(54,4)$, $(108,2)$
            
            \medskip
            \textbf{Stage 2: Finalization} \\
            \texttt{ORGANIZE\_SOLUTION}, \texttt{TERMINATE} \\
            Verifies all pairs meet both $\gcd=2$ and $\mathop{\text{lcm}}=108$.
            
            \medskip
            \textbf{Final Answer:} \textbf{4} \hfill \textbf{Correct}
        \end{tcolorbox}
    \end{tcolorbox}
\end{figure}

\begin{figure}[t]
    \centering
    \begin{tcolorbox}[outerbox, title=Case Study (Logic Puzzle)]
        \begin{tcolorbox}[innerbox, title=Problem Description]
            \textbf{Task:} In this question, assume each person either always tells the truth or always lies. [Details of people, locations, and statements are given]. Does the person at the farm tell the truth? Does the person at the restaurant tell the truth? Does the person at the observatory tell the truth? Output answer as a list of three words: yes or no.\\
            \textbf{Ground Truth: yes, yes, no}
        \end{tcolorbox}
        
        \vspace{0.5em}
        
        \begin{tcolorbox}[innerbox, title=CoT Answer by Phi-4]
            \textbf{Steps:}
            \begin{itemize}
                \item Identifies known truth/lie statuses (Jake at airport, Jaxon at amusement park, Devika at observatory).
                \item Lists statements made by target individuals (Farm, Restaurant, Observatory).
                \item Correctly deduces Devika (Observatory) lies.
                \item Correctly deduces Luna (Restaurant) tells the truth because her statement about Devika is true.
                \item Analyzes Max (Farm): Notes Elowen and Liam say Max lies, but \textbf{fails to fully leverage the crucial deduction that Luna (Restaurant) tells the truth}.
                \item Instead of using Max's statement "The person at the restaurant tells the truth" (which is true, as Luna tells the truth) to prove Max is a truth-teller, it focuses on other statements and concludes Max is "likely lying".
            \end{itemize}
            \textbf{Flaw:} Missed a key logical step. It correctly deduced Luna at the Restaurant is a truth-teller. Max at the Farm states that the person at the Restaurant tells the truth. Since Max's statement about Luna is true, Max himself must be a truth-teller. The CoT failed to make this final deduction about Max, leading to an incorrect conclusion for the Farm.\\
            \textbf{Final Answer:} \textbf{no, yes, no} \hfill \textbf{Incorrect}
        \end{tcolorbox}
        
        \vspace{0.5em}
        
        \begin{tcolorbox}[innerbox, title=DyFlow Answer by Phi-4]
            \textbf{Stage 1: Decomposition and Planning}\\
            \texttt{DECOMPOSE\_PROBLEM}, \texttt{GENERATE\_PLAN}\\
            Breaks down the complex puzzle into explicit constraints, person-location mapping, and generates a systematic plan for logical deduction. This ensures all pieces of information are organized for analysis.

            \medskip
            \textbf{Stage 2: Solution Generation and Review}\\
            \texttt{GENERATE\_ANSWER}, \texttt{REVIEW\_SOLUTION}\\
            Generates a step-by-step solution using the structured information. Crucially, during the deduction phase (as seen in the final organized solution), it correctly identifies:
            \begin{itemize}
                \item Devika (Observatory) lies (known).
                \item Luna (Restaurant) tells the truth because she correctly states Devika lies.
                \item \textbf{Max (Farm) tells the truth because he states the person at the Restaurant (Luna) tells the truth, and Luna has been correctly deduced to be a truth-teller.} This is the critical deduction missed by CoT.
            \end{itemize}
            The \texttt{REVIEW\_SOLUTION} step confirms the logical consistency and completeness of the deductions against all constraints.

            \medskip
            \textbf{Stage 3: Finalization}\\
            \texttt{ORGANIZE\_SOLUTION}, \texttt{TERMINATE}\\
            Organizes the verified, correct solution and presents the final answer in the required format.

            \medskip
            \textbf{Final Answer:} \textbf{yes, yes, no} \hfill \textbf{Correct}
        \end{tcolorbox}
    \end{tcolorbox}
\end{figure}

\begin{figure}[t]
    \centering
    \begin{tcolorbox}[outerbox, title=Case Study (LiveBench)]
        \begin{tcolorbox}[innerbox, title=Problem Description]
            \textbf{Task:} Suppose I have a regular heptagon, and I can make four straight cuts. Each cut cannot pass through any of the vertices of the heptagon. Also, exactly three of the cuts must intersect at a single point within the heptagon. What is the maximum number of resulting pieces? Output answer as a single integer.\\
            \textbf{Ground Truth: 10}
        \end{tcolorbox}
        
        \vspace{0.5em}
        
        \begin{tcolorbox}[innerbox, title=CoT Answer by Phi-4]
            \textbf{Steps:}
            \begin{itemize}
                \item Identifies constraints (4 cuts, no vertices, 3 concurrent).
                \item Mentions general formula for maximum regions by $n$ lines in a plane: $R(n) = \frac{n(n+1)}{2} + 1$.
                \item Notes three concurrent lines form 6 regions instead of 7 (correct for 3 lines in a plane).
                \item Attempts calculation: "Initial regions formed by three intersecting lines: 6". This is likely referring to the plane, not pieces in the heptagon (3 concurrent cuts divide a convex shape into 5 pieces).
                \item States "The maximum number of regions formed by four lines, with three intersecting at a point, is 11". This is incorrect. The maximum regions in the plane for 4 lines with 3 concurrent is 10 ($R(4) - (3-1)(3-2)/2 = 11 - 1 = 10$). More importantly, it misapplies plane region logic to pieces within a bounded shape under concurrency. A line crossing $k$ existing segments adds $k+1$ pieces.
            \end{itemize}
            \textbf{Flaw:} Miscalculated the maximum number of regions in the plane for 4 lines with 3 concurrent (states 11, should be 10). More fundamentally, failed to correctly reason about how cuts add \emph{pieces within the bounded heptagon}, especially under the concurrency constraint. It seems to confuse total regions in the plane with pieces within the shape, and incorrectly handles the impact of the fourth cut.\\
            \textbf{Final Answer:} 11 \hfill \textbf{Incorrect}
        \end{tcolorbox}
        
        \vspace{0.5em}
        
        \begin{tcolorbox}[innerbox, title=DyFlow Answer by Phi-4]
            \textbf{Stage 1: Decomposition, Solution Generation, and Review} \\
            \texttt{DECOMPOSE\_PROBLEM}, \texttt{GENERATE\_ANSWER}, \texttt{REVIEW\_SOLUTION} \\
            DyFlow begins by decomposing the task into fundamental constraints: 4 cuts, 3 of which must intersect at a single point, and none of which may pass through vertices. The planner generates a solution by correctly reasoning that three concurrent cuts form 6 regions inside the heptagon, and the fourth cut, intersecting all three, adds 4 new regions (1 for each segment crossed). This leads to a total of $6 + 4 = 10$ pieces.\\
            
            The \texttt{REVIEW\_SOLUTION} operator verifies this reasoning, ensuring that the concurrency constraint and the incremental piece count are correctly applied. It confirms that no logical or constraint violations are present and that the plan is internally consistent.

            \medskip
            \textbf{Stage 2: Finalization} \\
            \texttt{ORGANIZE\_SOLUTION}, \texttt{TERMINATE} \\
            The final verified solution is organized for presentation. The planner concludes the task by outputting the answer in the required format.

            \medskip
            \textbf{Final Answer:} \textbf{10} \hfill \textbf{Correct}
        \end{tcolorbox}
    \end{tcolorbox}
\end{figure}

\begin{figure}[t]
    \centering
    \begin{tcolorbox}[outerbox, title=Case Study (SocialMaze)]
        \begin{tcolorbox}[innerbox, title=Problem Description]
            \textbf{Task:} Deduce the criminal and Player 1's role in a 6-player social deduction game (3 Investigators, 1 Criminal, 1 Rumormonger, 1 Lunatic) after 3 rounds of statements. Player 1 is told they are the Criminal. Investigators always tell the truth about the Criminal. Rumormongers believe they are Investigators but may be incorrect. Lunatics believe they are Criminals and may be truthful or false. Output format: Final Criminal Is Player [X]. My Role Is [Y or Unknown]. (Only provide part of the problem description due to limited space.)\\
            \textbf{Ground Truth:} Final Criminal Is Player 4. My Role Is Lunatic.
        \end{tcolorbox}

        \vspace{0.5em}

        \begin{tcolorbox}[innerbox, title=CoT Answer by Phi-4]
            \textbf{Steps:}
            \begin{itemize}
                \item Summarizes rules and statements per round.
                \item Notes P6 accuses P1 in R1, P5 denies P1 in R1.
                \item Notes P2,3,4,6 deny P1 in R2/R3 (mostly).
                \item Notes P2,3 accuse P4 in R3.
                \item \textbf{Flawed Deduction on Player 1:} Concludes Player 1 is "likely the Criminal" because they were told they are the Criminal and their statements are inconsistent. This ignores the rule that Lunatics are \emph{told} they are Criminals and also make inconsistent statements, and more importantly, contradicts the evidence from multiple players denying P1 is the criminal.
                \item \textbf{Flawed Deduction on Player 6:} States P6 is "likely an Investigator, as they consistently accuse Player 1". This is false; P6 accuses P1 (R1), denies P1 (R2), accuses P3 (R3). P6 is \emph{inconsistent}.
                \item Attempts role assignment based on these flawed deductions, leading to P1 as Criminal and P2,3,6 as Investigators (which contradicts P2,3,4,6 denying P1 in R2/R3).
            \end{itemize}
            \textbf{Final Judgment:} Final Criminal: 1, My role: Criminal \hfill \textbf{Incorrect}
        \end{tcolorbox}

        \vspace{0.5em}

        \begin{tcolorbox}[innerbox, title=DyFlow Answer by Phi-4]
            \textbf{Stage 1: Initial Deduction and Review}\\
            \texttt{GENERATE\_ANSWER} - Analyzes the statements and role rules to produce an initial answer: Player 4 is the Criminal; Player 1's role remains uncertain.\\
            \texttt{REVIEW\_SOLUTION} - Checks the consistency of the deduction. Confirms that Player 4 is a strong candidate for Criminal but flags the ambiguity in Player 1’s role as a minor issue.

            \medskip
            \textbf{Stage 2: Refinement and Re-Review}\\
            \texttt{REFINE\_ANSWER} - Reevaluates Player 1's status using rule-based reasoning: being told they are the Criminal while evidence contradicts that claim aligns with the definition of a Lunatic.\\
            \texttt{REVIEW\_SOLUTION} - Validates the refined answer. Confirms that all constraints are satisfied and the role assignment is consistent and complete.

            \medskip
            \textbf{Stage 3: Finalization}\\
            \texttt{ORGANIZE\_SOLUTION}, \texttt{TERMINATE} - Formats and outputs the final answer.

            \medskip
            \textbf{Key Deductions:}
            \begin{enumerate}
                \item P2 and P3 consistently accuse Player 4. Since Investigators always speak truthfully about the Criminal, this supports P4 as the true Criminal.
                \item Player 1 was told they are the Criminal, but all external evidence contradicts this. Under the game rules, this behavior corresponds to the Lunatic role.
            \end{enumerate}

            \medskip
            \textbf{Final Judgment:} Final Criminal Is Player 4. My Role Is Lunatic. \hfill \textbf{Correct}
        \end{tcolorbox}
    \end{tcolorbox}
\end{figure}

\begin{figure}[t]
    \centering
    \begin{tcolorbox}[outerbox, title=Case Study (Code Generation - Simple Parsing)]
        \begin{tcolorbox}[innerbox, title=Problem Description]
            \textbf{Task:} Write a Python function \texttt{fruit\_distribution(s, n)} that takes a string \texttt{s} ("X apples and Y oranges") and a total fruit count \texttt{n}, and returns the number of mangoes (\texttt{n - X - Y}). Examples provided. \\
            \textbf{Ground Truth:} Code that correctly parses the string \texttt{s} to extract X and Y, and returns \texttt{n - X - Y}.
        \end{tcolorbox}

        \vspace{0.5em}

        \begin{tcolorbox}[innerbox, title=CoT Answer (Original Code) by Phi-4]
            \textbf{Summary:} The generated code extracts the number of apples correctly using \texttt{parts[0]}, but incorrectly attempts to extract oranges using \texttt{parts[4]}, which results in a \texttt{ValueError} when parsing the word \texttt{"oranges"}. \\
            \textbf{Result:} Code fails at runtime due to incorrect parsing. \hfill \textbf{Incorrect}
        \end{tcolorbox}

        \vspace{0.5em}

        \begin{tcolorbox}[innerbox, title=DyFlow Answer by Phi-4]
            \textbf{Stage 1: Initial Generation \& Review} \\
            \texttt{GENERATE\_CODE} produces a first draft, likely with the same parsing flaw. \\
            \texttt{REVIEW\_SOLUTION} identifies the issue via failed test cases. (Status: Minor Issues)

            \medskip
            \textbf{Stage 2: Test Construction \& Refinement} \\
            \texttt{DEFAULT} - Dynamically constructs targeted test cases to diagnose parsing behavior and edge failures. \\
            \texttt{REFINE\_CODE} - Uses the generated tests to revise the parsing logic. Attempts an improved implementation. \\
            \texttt{REVIEW\_SOLUTION} - Reviews the revised code. Some issues remain unresolved in edge cases. (Status: Minor Issues)

            \medskip
            \textbf{Stage 3: Further Refinement \& Final Review} \\
            \texttt{REFINE\_CODE} - Performs another round of refinement. This time adopts robust parsing (e.g., regex). \\
            \texttt{REVIEW\_SOLUTION} - Validates that all test cases now pass. (Status: Accept)

            \medskip
            \textbf{Stage 4: Finalization} \\
            \texttt{ORGANIZE\_SOLUTION} - Formats and documents the final code. \\
            \texttt{TERMINATE} - Outputs the final result and concludes the workflow.

            \medskip
            \textbf{Result:} Code passes all tests and produces correct output. \hfill \textbf{Correct}
        \end{tcolorbox}
    \end{tcolorbox}
\end{figure}

\clearpage

\section{Prompt Template}

\begin{figure}[htbp]
\centering
\begin{promptbox}{GENERATE\_CODE}
You are an expert in solving coding problems. Generate Python code based on the following context and guidance.\\
\\
Context: \{context\}\\
Guidance: \{guidance\}\\
\\
Your code must:\\
1. Define a function named \texttt{solve} that calculates and returns the final result.\\
2. Clearly comment each computational step.\\
3. Obtain necessary inputs from within the function or global variables (no function parameters).\\
\\
Output Format:\\
\texttt{```python}\\
\# Your generated code here (use the main function name `solve`)\\
\texttt{```}
\end{promptbox}
\end{figure}

\begin{figure}[htbp]
\centering
\begin{promptbox}{GENERATE\_ANSWER}
You are an expert in solving reasoning problems. Think step by step to solve the problem using the context and guidance.\\
\\
Context: \{context\}\\
Guidance: \{guidance\}\\
\\
Output Format:\\
Reasoning: <You should think step by step to solve the problem.>\\
Answer:
\end{promptbox}
\end{figure}

\begin{figure}[htbp]
\centering
\begin{promptbox}{REVIEW\_SOLUTION}
You are a careful reviewer trained to detect logical and mathematical errors. Your job is to critically evaluate the solution for correctness, soundness, and completeness.\\
\\
Context: \{context\}\\
Guidance: \{guidance\}\\
\\
Instructions:\\
- Try to find mistakes at every step of the given answer.\\
- Bring the answer back to the original question and check if there is anything that does not meet the requirements.\\
\\
Output Format:\\
Review Details: <step-by-step review>\\
Overall Verdict: <accept/minor\_issues/major\_issues/reject>
\end{promptbox}
\end{figure}

\begin{figure}[htbp]
\centering
\begin{promptbox}{DECOMPOSE\_PROBLEM}
You are an expert in decomposing problems. Break down the original problem into clearly defined, structured sub-tasks.\\
\\
Context: \{context\}\\
Guidance: \{guidance\}\\
\\
Instructions:\\
- Clearly outline each distinct sub-task.\\
- Do not attempt to solve any sub-task.\\
- Maintain logical completeness.\\
- Decompose the problem into 2--4 steps at most.\\
\\
Output:\\
<your\_decomposed\_problem>
\end{promptbox}
\end{figure}

\begin{figure}[htbp]
\centering
\begin{promptbox}{GENERATE\_PLAN}
You are an expert in generating step-by-step executable plans. Generate a step-by-step executable plan to approach the given problem.\\
\\
Context: \{context\}\\
Guidance: \{guidance\}\\
\\
Instructions:\\
- Clearly number each step.\\
- Ensure each step is actionable and logically sequenced.\\
- Do not solve the problem here, only provide the plan.\\
- Give 2--4 steps at most.\\
\\
Output Format:\\
Solution Plan:\\
<step\_id>: <description>\\
<step\_id>: <description>
\end{promptbox}
\end{figure}

\begin{figure}[htbp]
\centering
\begin{promptbox}{REFINE\_CODE}
You are an expert in refining Python code. Refine the existing Python code based on context and guidance.\\
\\
Context: \{context\}\\
Guidance: \{guidance\}\\
\\
Instructions:\\
- Correct errors or inefficiencies identified.\\
- Clearly comment important logic or corrections.\\
- Maintain the main function name as \texttt{solve}.\\
\\
Output Format:\\
\texttt{```python}\\
\# Your refined code here (use the main function name `solve`)\\
\texttt{```}
\end{promptbox}
\end{figure}

\begin{figure}[htbp]
\centering
\begin{promptbox}{REFINE\_ANSWER}
You are an expert in refining answers. Refine the existing answer based on context and guidance.\\
\\
Context: \{context\}\\
Guidance: \{guidance\}\\
\\
Output Format:\\
Answer: <your refined answer>
\end{promptbox}
\end{figure}

\begin{figure}[htbp]
\centering
\begin{promptbox}{ORGANIZE\_SOLUTION}
You are an expert in organizing solutions. Clearly organize the final solution for presentation based on the provided context and guidance.\\
\\
Context: \{context\}\\
Guidance: \{guidance\}\\
\\
Instructions:\\
- Clearly present final reasoning steps and results.\\
- Ensure alignment with the problem's required formatting.\\
- Omit irrelevant or incorrect previous attempts.\\
\\
Output:\\
<your\_organized\_solution>
\end{promptbox}
\end{figure}

\begin{figure}[htbp]
\centering
\begin{promptbox}{ENSEMBLE}
You are an expert in generating multiple valid and diverse solutions using distinct logical approaches.\\
\\
Context: \{context\}\\
Guidance: \{guidance\}\\
\\
Instructions:\\
- Each solution must independently satisfy all constraints.\\
- Clearly separate each distinct reasoning path and solution.\\
\\
Output:\\
<your\_ensemble\_output>
\end{promptbox}
\end{figure}

\begin{figure}[htbp]
\centering
\begin{promptbox}{DEFAULT}
You are an expert in executing actions strictly according to the given context and guidance.\\
\\
Context: \{context\}\\
Guidance: \{guidance\}\\
\\
Instructions:\\
- Follow every detail of the instructions carefully.\\
- Ensure output exactly matches the requested format.\\
\\
Output:\\
<your\_output>
\end{promptbox}
\end{figure}

\clearpage

\end{document}